\newcommand{\ourtitle}{Prediction without Preclusion:\\Recourse Verification with Reachable Sets}
\newcommand{\newsection}[1]{\clearpage\section{#1}}
\newcommand{\newsection}[1]{\section{#1}}
\title{\ourtitle{}}
\author{%
    Avni Kothari$^\dagger$\\
    UCSF
    \And
    Bogdan Kulynych$^\dagger$\\
    EPFL
    \And
    Tsui-Wei Weng\\
    UCSD
    \And
    Berk Ustun\\
    UCSD
}
\renewcommand\AB@affilsepx{\quad \protect\Affilfont}
\title{\centering\ourtitle{}}
\author[1]{Avni Kothari$^\dagger$}
\author[2]{Bogdan Kulynych$^\dagger$}
\author[1]{Tsui-Wei Weng}
\author[1]{Berk Ustun}
\affil[1]{UCSD}
\affil[2]{EPFL}
\date{}
\renewcommand{\thmcontinues}[1]{}
\definecolor{good}{HTML}{BAFFCD}
\definecolor{bad}{HTML}{FFC8BA}
\renewcommand{\arraystretch}{1.2}
\newcolumntype{H}{>{\setbox0=\hbox\bgroup}c<{\egroup}@{}}
\newcolumntype{R}[1]{>{\raggedright\arraybackslash}p{#1}}
\newcommand{\textheader}[1]{{\bfseries{#1}}}
\newcommand{\cell}[2]{\setlength{\tabcolsep}{0pt}\begin{tabular}{#1}#2 \end{tabular}}
\newcommand{\pitfall}{red!70!black}
\setlist[enumerate]{leftmargin=*, label= {\arabic*.}, itemsep=0.5em}
\newtheorem{theorem}{Theorem}
\newtheorem{definition}[theorem]{Definition}
\algnewcommand{\alginput}[2]{\Statex{Input:~#1}\Comment{#2}}
\algnewcommand\algorithmicinput{\textbf{Input}}
\algnewcommand\algorithmicinitialize{\textbf{Initialize}}
\algnewcommand\algorithmicbigstep{\textbf{Step}}
\algnewcommand\INPUT{\item[\algorithmicinput]}
\algnewcommand\INITIALIZE{\item[\algorithmicinitialize]}
\algnewcommand{\STEP}[1]{\item[\algorithmicbigstep]{\textbf{#1}}}
\algnewcommand{\InputExplanation}[2][.6\linewidth]{\leavevmode\hfill\makebox[#1][r]{~{\footnotesize{#2}}}}
\algnewcommand{\InitializationExplanation}[2][.6\linewidth]{\leavevmode\hfill\makebox[#1][r]{~{\footnotesize{#2}}}}
\algnewcommand{\StateComment}[2]{\State{#1}\InputExplanation{#2}}
\algnewcommand{\alginitialize}[2]{\Statex{#1}\InitializationExplanation{#2}}
\algrenewcommand\algorithmiccomment[2][]{#1\hfill\textit{\scriptsize{#2}}}
\patchcmd{\hyper@makecurrent}{%
    \ifx\Hy@param\Hy@chapterstring
        \let\Hy@param\Hy@chapapp
    \fi
}{%
    \iftoggle{inappendix}{%
        \@checkappendixparam{chapter}%
        \@checkappendixparam{section}%
        \@checkappendixparam{subsection}%
        \@checkappendixparam{subsubsection}%
        \@checkappendixparam{paragraph}%
        \@checkappendixparam{subparagraph}%
    }{}%
}{}{\errmessage{failed to patch}}
\newcommand*{\@checkappendixparam}[1]{%
    \def\@checkappendixparamtmp{#1}%
    \ifx\Hy@param\@checkappendixparamtmp
        \let\Hy@param\Hy@appendixstring
    \fi
}
\newcommand*{\centerfloat}{%
  \parindent \z@
  \leftskip \z@ \@plus 1fil \@minus \textwidth
  \rightskip\leftskip
  \parfillskip \z@skip}
\apptocmd{\appendix}{\toggletrue{inappendix}}{}{\errmessage{failed to patch}}
\apptocmd{\subappendices}{\toggletrue{inappendix}}{}{\errmessage{failed to patch}}
\newcommand{\textds}{\texttt}
\newcommand{\textfn}{\texttt}
\DeclareMathOperator*{\argmin}{argmin}
\newcommand{\indic}[1]{\mathbbm{1}[#1]}
\newcommand{\B}{\{0,1\}}
\newcommand{\R}{\mathbb{R}}
\newcommand{\Z}{\mathbb{Z}}
\newcommand{\txplus}[0]{+}
\newcommand{\txminus}[0]{-}
\newcommand{\optpar}[2]{\ifthenelse{\isempty{#2}}{#1}{#1({#2})}}
\newcommand{\vecvar}[1]{\bm{#1}}
\newcommand{\xb}{\vecvar{x}}
\newcommand{\X}{\mathcal{X}}
\newcommand{\Y}{\mathcal{Y}}
\newcommand{\nplus}[1]{n^{\txplus{}}_{#1}}
\newcommand{\nminus}[1]{n^{\txminus{}}_{#1}}
\newcommand{\clf}[1]{\optpar{f}{#1}}
\newcommand{\ab}[0]{\vecvar{a}}
\newcommand{\Aset}[1]{\optpar{A}{#1}}
\newcommand{\Ajset}[1]{A_j(#1)}
\newcommand{\Rset}[1]{\optpar{R_{A}}{#1}}
\newcommand{\ypos}{1}
\newcommand{\yneg}{0}
\newcommand{\xp}{\xb}
\newcommand{\xq}{\xb'}
\newcommand{\flagyes}[0]{\mathtt{Yes}}
\newcommand{\flagno}[0]{\mathtt{No}}
\newcommand{\flagidk}[0]{\bot}
\newcommand{\Verify}[2]{\mathsf{Recourse}(#1, #2, \Aset{})}
\newcommand{\RsetVerify}[3]{\mathsf{Recourse}(#1, #2, #3)}
\newcommand{\cost}[2]{\textnormal{cost}({#1}~|~{#2})}
\newcommand{\FPD}[1]{\mathsf{FindAction}{(#1)}}
\newcommand{\st}{\textnormal{s.t.}}
\newcommand{\mipwhat}[1]{\emph{\scriptsize #1}}
\newcommand{\ajpos}{a_{j}^+}
\newcommand{\ajneg}{a_{j}^-}
\newcommand{\yes}{\ding{51}}%
\newcommand{\no}{\ding{55}}%
\begin{document}
\def\thefootnote{$\dagger$}\footnotetext{Equal Contribution}\def\thefootnote{\arabic{footnote}}
\maketitle

\begin{abstract}
Machine learning models are often used to decide who receives a loan, a job interview, or a public benefit. Models in such settings use features without considering their \emph{actionability}. As a result, they can assign predictions that are \emph{fixed} -- meaning that individuals who are denied loans and interviews are, in fact, \emph{precluded from access} to credit and employment. In this work, we introduce a procedure called \emph{recourse verification} to test if a model assigns fixed predictions to its decision subjects. We propose a model-agnostic approach for recourse verification with \emph{reachable sets} -- i.e., the set of all points that a person can reach through their actions in feature space. We develop methods to construct reachable sets for discrete feature spaces, which
can certify the responsiveness of \emph{any model} by simply querying its predictions. We conduct a comprehensive empirical study on the infeasibility of recourse on datasets from consumer finance. Our results highlight how models can inadvertently preclude access by assigning fixed predictions and underscore the need to account for actionability in model development.
\end{abstract}

\section{Introduction}
\label{Sec::Introduction}

Machine learning models routinely assign predictions to \emph{people} -- be it to approve a loan~\citep[][]{hurley2016credit}, extend a job interview~\citep[][]{bogen2018help,raghavan2020mitigating}, or grant a public benefit~\citep{wykstra2020government,eubanks2018automating,gilman2020poverty}. Models in such settings use features about individuals without accounting for how individuals can change them. In turn, they may assign predictions that are not \emph{responsive} to the actions of their decision subjects. In effect, even the most accurate model can assign a prediction that is \emph{fixed} (see \cref{Fig::PredictionWithoutRecourse2DLending}).

The responsiveness of machine learning models to our actions is vital to their safety in consumer-facing applications. In applications like content moderation, models \emph{should} assign fixed predictions to prevent malicious actors from circumventing detection by manipulating their features~\citep[]{jurgovsky2018sequence,mathov2022not,kireev2023adversarial}. In lending and hiring, however, predictions should exhibit \emph{some} sensitivity to our actions. Otherwise, models that deny loans and interviews may \emph{preclude access} to credit and employment, thus violating basic rights such as equal opportunity~\citep[][]{arneson2015equality} and universal access~\citep{daniels1982equity}.

In this work, we introduce a formal verification procedure to test the responsiveness of a model's predictions with respect to the actions of its decision subjects. Our procedure is grounded in a stream of work on algorithmic recourse~\citep[][]{ustun2019actionable,venkatasubramanian2020philosophical,karimi2020algorithmic}. While much of the work in this area focuses on \emph{recourse provision} -- i.e., providing a person with actions to obtain a desired prediction from a model -- we focus on \emph{recourse verification} -- i.e., certifying that a model assigns predictions that each person can change. In contrast to provision, verification is a model auditing procedure that practitioners can use to flag models that preclude access or promote manipulation.

The key challenge in recourse verification stems from the fact that we must test the sensitivity of a model's predictions with respect to \emph{actions} rather than arbitrary changes in feature space. Given a model for loan approval, for example, actions on a feature such as \textfn{years\_of\_account\_history} should set its value to a positive integer and induce a commensurate change in other temporal features such as \textfn{age}. Such constraints are easy to specify, but difficult to enforce in a method that is designed for verification. To claim that a model assigns a fixed prediction, a method must prove that its predictions will not change under any possible action. This is a non-trivial computational task -- especially for complex models -- as we perform an exhaustive search over a region of feature space that captures both the model as well as actionability constraints.

Our main contributions include:
\begin{enumerate}[itemsep=0.1em, topsep=0.0em]

    \item  We present a model-agnostic approach for recourse verification by constructing a \emph{reachable set} -- i.e., a set of all points that a person can attain through their actions in feature space. Given a reachable set, we can certify the responsiveness of a model's predictions by simply querying its predictions over reachable points. %

    \item We develop fast methods to construct reachable sets for discrete feature spaces. Our methods can construct complete reachable sets for complex actionability constraints, and can support practical verification in model development and deployment. %

    \item We conduct a comprehensive empirical study on the infeasibility of recourse in consumer finance applications. Our results show how models can assign fixed predictions due to inherent actionability constraints, and demonstrate how existing methods to generate recourse actions and counterfactual explanations may inflict harm by failing to detect such instances.

    \item We develop a \href{\repourl}{Python package} for recourse verification with reachable sets. Our package includes an API for practitioners to easily specify complex actionability constraints, and routines to test the actionability of recourse actions and counterfactual explanations.

\end{enumerate}

\begin{figure*}[t!]
\centering
\vspace{-5em}
\resizebox{\linewidth}{!}{
\includegraphics[trim={0.5in 5.5in 0.5in 0},clip, width=1.2\linewidth]{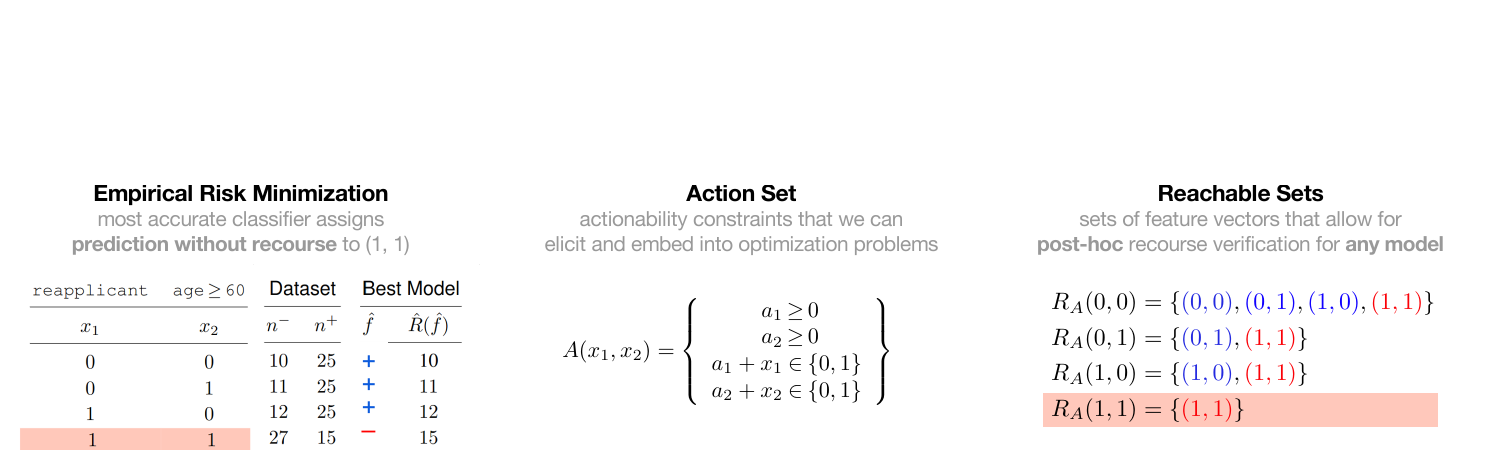}

}
\vspace{-1em}
\caption{Stylized classification task where the most accurate classifier on a dataset with $\nminus{} = 60$ negative examples and $\nplus{} = 90$ positive examples assigns a \textcolor{red!60!orange}{prediction without recourse} to individuals with $(x_1, x_2) = (1,1)$. We predict $y = \texttt{\footnotesize{repay\_loan}}$ using two binary features $(x_1, x_2) = ({\texttt{\footnotesize{reapplicant}},\texttt{\footnotesize{{age}}}\geq\texttt{\footnotesize{60}}})$, which can only increase from 0 to 1. We denote the actions on each feature as $(a_1, a_2)$ and show the constraints they must obey in the \emph{action set}. Given any model, we certify the responsiveness of its outputs for $(x_1, x_2)$ by checking its prediction for each point in the \emph{reachable set} $R_A(x_1, x_2)$. In this case, 42 individuals with $(x_1, x_2) = (1,1)$ are assigned a prediction without recourse as $f(x_1, x_2) = \yneg{}$ for all $(x_1, x_2) \in R_A(1, 1)$.} %
\label{Fig::PredictionWithoutRecourse2DLending}
\end{figure*}

\paragraph{Related Work}

We focus on a new direction for algorithmic recourse~\citep[][]{ustun2019actionable,venkatasubramanian2020philosophical,karimi2021survey} -- i.e., as a procedure to certify the responsiveness of a model's predictions with respect to the actions of its decision subjects.  Although actionability is a defining characteristic of recourse~\cite[see][]{venkatasubramanian2020philosophical}, few works mention models may assign fixed predictions as a result of actionability constraints~\cite {ustun2019actionable,karimi2021algorithmic,dominguez2022adversarial}. The lack of awareness stems, in part, from the fact that methods for recourse provision are typically designed and evaluated with simple actionability constraints such as immutability and monotonicity. As we show in \cref{Appendix::IllusionFeasibility}, however, infeasibility only arises once we start to consider actionability constraints that are difficult to handle in algorithm design.

We study recourse verification as a model auditing procedure to safeguard access in applications like lending.
In such applications, verification is essential for reliable recourse provision -- as it can flag a model that cannot provide recourse to consumers before it is deployed. To this end, our motivation aligns with a stream of work on the robustness of recourse provision with respect to distribution shifts~\citep{rawal2020algorithmic,guo2022rocoursenet,altmeyer2023endogenous,o2022toward,hardt2023algorithmic}, model updates~\citep{upadhyay2021towards,pawelczyk2022algorithmic}, and causal effects \citep{mahajan2019preserving,karimi2020algorithmic,von2022fairness}. More broadly, recourse verification is a procedure to test the responsiveness of predictions over semantically meaningful features, which may be useful for stress testing for counterfactual invariance~\citep[]{veitch2021counterfactual,makar2022causally,quinzan2022learning}, certifying adversarial robustness on tabular datasets~\citep{lowd2005adversarial,kantchelian2016evasion,hein2017formal,vos2021efficient,mathov2022not,kireev2023adversarial}, or designing models that incentivize improvement or deter strategic manipulation~\citep{hardt2016strategic,dong2018strategic,levanon2021strategic,milli2019social,ghalme2021strategic,chen2020strategic, harris2021stateful, shavit2020causal,kleinberg2018strategic,kleinberg2020classifiers,alon2020multiagent,harris2022bayesian}.

\clearpage
\newsection{Recourse Verification}
\label{Sec::Framework}

We consider a standard classification task where we are given a model $\clf{}: \X \to \Y$ to predict a label $y \in \Y = \{0, 1\}$ from a vector of features $\xb = [x_{1}, \ldots, x_{d}] \in \X$ in a bounded feature space $\X$. We assume each instance represents a person, and that $\clf{\xb} = 1$ represents a desirable \emph{target prediction} -- e.g., an applicant with features $\xb$ will repay a loan within 2 years.

Our goal is to test if each person can obtain a target prediction from the model by changing their features. We represent such changes in terms of actions. Formally, each \emph{action} as a vector $\ab = [a_1, \ldots, a_d] \in \R^d$ that shifts their features from $\xp$ to $\xp + \ab = \xq \in \X$. We refer to the set of all actions from $\xb \in \X$ as an \emph{action set} $\Aset{\xb}$, and assume that it contains a \emph{null action} $\bm{0} \in \Aset{\xb}$.  In practice, an action set $\Aset{\xb}$ is a collection of constraints. As shown in \cref{Table::ActionabilityConstraintsCatalog}, we can express these constraints in natural language or as equations that we can embed into an optimization problem.

Semantically meaningful features admit hard actionability constraints. In the simplest cases, actionability constraints reflect the way that semantically meaningful features can only be altered in specific ways. For example, a model may use a feature that cannot change (e.g., \textfn{age}) or that can only be changed in specific ways (e.g., \textfn{has\_phd}, which can only be changed from 0 to 1). More generally, constraints may require that changing one feature will induce changes on other features (e.g., changing \textfn{married} from 0 to 1 must set \textfn{single} from 1 to 0). Such downstream effects can be directional (e.g., changing \textfn{retired} from 1 to 0 will set \textfn{work\_days\_per\_week} to 0, but not vice-versa), and may affect features that are not themselves actionable (e.g., changing \textfn{years\_of\_account\_history} from 0 to 2 will increase \textfn{age} by 2 years).

\begin{table*}[h]
    \centering
    \resizebox{\linewidth}{!}{
    \begin{tabular}{lcHcR{0.45\linewidth}lR{0.45\linewidth}}
         \textbf{Class} &
         \textbf{Separable} &
         \textbf{Global} &
         \textbf{Discrete} &
         \textbf{Example} &
         \textbf{Features} &
         \textbf{Constraint}
         \\
    \cmidrule(lr){1-1} \cmidrule(lr){2-4} \cmidrule(lr){5-7}

    Immutability &
    \yes & \yes & \no &
    \cell{l}{$\textfn{n\_dependents}$ should not change} &
    $x_j = \textfn{n\_dependents}$ &
    $a_j = 0$ \\
    \cmidrule(lr){1-1} \cmidrule(lr){2-4} \cmidrule(lr){5-7}

    Monotonicity &
    \yes & \yes & \no &
    $\textfn{reapplicant}$ can only increase &
    $x_j = \textfn{reapplicant}$ &
    $a_j \geq 0$ \\
    \cmidrule(lr){1-1} \cmidrule(lr){2-4} \cmidrule(lr){5-7}

    Integrality &
    \yes & \yes & \yes &
    $\textfn{n\_accounts}$ must be positive integer $\leq 10$ &
    $x_j = \textfn{n\_accounts}$ &
    $a_j \in \Z \cap [0 - x_j, 10 - x_j]$  \\
    \cmidrule(lr){1-1} \cmidrule(lr){2-4} \cmidrule(lr){5-7}

    \cell{l}{Categorical\\Encoding} &
    \no & \yes & \yes &
    \cell{l}{preserve one-hot encoding\\of $\textfn{married},\textfn{single}$} &
    \cell{l}{$x_j = \textfn{married}$\\$x_k = \textfn{single}$} &
    \cell{l}{
    $a_l + x_l \in \{0,1\} \quad x_k + a_k \in \{0,1\}$ \\
    $a_j + x_j  + a_k + x_k = 1$} \\
    \cmidrule(lr){1-1} \cmidrule(lr){2-4} \cmidrule(lr){5-7}

    \cell{l}{Ordinal\\Encoding} &
    \no & \yes & \yes &
    \cell{l}{preserve one-hot encoding of\\$\textfn{max\_degree\_BS}, \textfn{max\_degree\_MS}$}  &
    \cell{l}{$x_j = \textfn{max\_degree\_BS}$\\$x_k = \textfn{max\_degree\_MS}$} &
    \cell{l}{%
    $a_j + x_j \in \{0,1\} \quad x_k + a_k \in \{0,1\}$ \\
    $a_j + x_j  + a_k + x_k = 1 \quad a_j + x_j \geq a_k + x_k$
    } \\

    \cmidrule(lr){1-1} \cmidrule(lr){2-4} \cmidrule(lr){5-7}

    \cell{l}{Logical\\Implications} &
    \no & \yes & \yes &
    \cell{l}{%
    if $\textfn{is\_employed}=\textfn{TRUE}$\\
    then $\textfn{work\_hrs\_per\_week} \geq 0 $\\
    else $\textfn{work\_hrs\_per\_week} = 0$}  &
    \cell{l}{%
    $x_j = \textfn{is\_employed}$\\$x_k = \textfn{work\_hrs\_per\_week}$} &
    \cell{l}{%
    $a_j + x_j \in \{0,1\}$\\
    $a_k + x_k \in [0, 168]$\\
    $a_j + x_j\leq 168(x_k + a_k)$%
    }  \\
    \cmidrule(lr){1-1} \cmidrule(lr){2-4} \cmidrule(lr){5-7}

    \cell{l}{Causal\\Implications} &
    \no & \no & \yes &
    \cell{l}{if $\textfn{years\_of\_account\_history}$ increases\\then $\textfn{age}$ will increase commensurately} &
    \cell{l}{$x_j = \textfn{years\_at\_residence}$\\$x_k = \textfn{age}$} &
    $a_j \leq a_k$ \\
    \end{tabular}
    }
    \caption{Examples of deterministic actionability constraints. We show how each constraint can be expressed in natural language and embedded into an optimization problem using standard techniques in mathematical programming~\citep[see e.g.,][]{wolsey2020integer}. We highlight constraints that are discrete and non-separable because they can only be enforced using special kinds of search algorithms.} \label{Table::ActionabilityConstraintsCatalog}
\end{table*}

\paragraph{Verification as a Feasibility Problem}

Given a model $\clf{}: \X \to \Y$ and a point $\xp \in \X$ with action set $\Aset{\xp}$, the \emph{recourse provision} task seeks to find an action $\ab \in \Aset{\xp}$ that minimizes a cost function $\cost{\ab}{\xp}$. This task requires finding an optimal solution to an optimization problem as in \cref{Opt::RecourseProvision}.

The \emph{recourse verification} task seeks to certify that recourse is infeasible from $\xp$ -- i.e., that a model assigns the same prediction $\clf{\xp+\ab} = \yneg{}$ for all actions $\ab \in \Aset{\xp}$. This task only requires finding a feasible solution to~\cref{Opt::RecourseProvision} and can be cast as the optimization problem in \cref{Opt::RecourseVerification}.\footnote{We set the objective of \cref{Opt::RecourseVerification} to a constant so that any algorithm that solves \cref{Opt::RecourseVerification} will terminate as soon as it has found a feasible solution.}

\begin{figure}[!h]
    \centering
    \vspace{-0.5em}
    \begin{subfigure}[b]{.4\textwidth}
        \centering
        \cell{c}{\footnotesize\color{gray}\textsf{Recourse Provision}}\\[-1.0em]
        \begin{align}
        \begin{split}
        \min\quad& \cost{\ab}{\xp}\\
        \st\quad & \clf{\xp + \ab} = \ypos{}\\
        & \ab \in \Aset{\xp}
        \end{split}\label{Opt::RecourseProvision}
        \end{align}
    \end{subfigure}
    \hfill
    \begin{subfigure}[b]{.4\textwidth}
        \centering
        \cell{c}{\footnotesize\color{gray}\textsf{Recourse Verification}}\\[-1.0em]
        \begin{align}
        \begin{split}
        \min\quad& 1 \\
        \st\quad & \clf{\xp + \ab} = \ypos{}\\
        & \ab \in \Aset{\xp}
        \end{split}\label{Opt::RecourseVerification}
        \end{align}
    \end{subfigure}
    \vspace{-.5em}
\end{figure}
We can write the input and output of a recourse verification method as the function:
\begin{align*}
    \Verify{\xp}{\clf{}} = \begin{cases}
        \flagyes, & \textrm{if method returns an action}\, \ab \in \Aset{\xp}
                   \;\textrm{such that}\, \clf{\xp+\ab} = \ypos{}\\
        \flagno,  & \textrm{if method proves that}\;\clf{\xp + \ab}  = \yneg{}
                   \;\textrm{for all actions}\, \ab \in \Aset{\xp}\\
        \flagidk{}, & \textrm{otherwise}
    \end{cases}%
\end{align*}
We say that a method for recourse verification \emph{certifies feasibility} from $\xp$ if it outputs $\flagyes{}$ and that it \emph{certifies infeasibility} from $\xp$ if it outputs $\flagno{}$. In practice, existing methods for recourse provision may return outputs that cannot support either of these claims. For example, they may fail to return an action without having searched exhaustively, or return an ``action'' that violates actionability constraints. In such cases, we say that the method \emph{abstains} for $\xp$ and denote its output as $\flagidk{}$.

\paragraph{Use Cases}

Recourse verification is a model auditing procedure to test the \emph{responsiveness} of a model's predictions with respect to the actions of its decision subjects. We can apply this procedure to flag models that are unsafe in consumer-facing applications by testing responsiveness with an appropriate action set.
\begin{itemize}[leftmargin=0pt,label={}]

    \item \emph{Detecting Preclusion.} In applications where we would like to safeguard access (e.g., lending), we can flag that a model $\clf{}$ precludes access by testing the responsiveness of predictions on points for which $\clf{}(\xp) = \yneg{}.$ In this case, we would specify an action set that captures indisputable constraints and applies to all individuals. We would claim that the model precludes access if $\Verify{\xp}{\clf{}}=\flagno{}$ for any point such that $\clf{}(\xp) = \yneg{}$. %

    \item \emph{Ensuring Robustness.} In applications where we would like to mitigate gaming (e.g., content moderation), we can certify that a model $\clf{}$ is vulnerable to adversarial manipulation by testing the responsiveness of its predictions on points for which $\clf{}(\xp) = \yneg{}.$ In this case, we would specify an action set $\Aset{\xp}$ that encodes a threat model~\cite{kireev2023adversarial} -- i.e., actions that let individuals obtain a target prediction by changing spurious features \citep[see][]{ghalme2021strategic,miller2020strategic}. We would claim that the model is vulnerable to manipulation if $\Verify{\xp}{\clf{}}=\flagyes{}$ for any point such that $\clf{}(\xp) = \yneg{}$.

\end{itemize}
Since these audits apply over points in feature space, we can run verification at different stages of a model lifecycle to minimize the chances of inflicting harm. In model development, we would test if a model assigns fixed predictions to any point in the training data. In deployment, we would repeat this test for new points. In both cases, the procedure would establish that a model assigns fixed predictions, and could support further interventions to mitigate these effects (see \cref{Sec::Experiments}).

Actionability can vary substantially between individuals~\cite[see][]{barocas2020hidden,venkatasubramanian2020philosophical}. In principle, we can account for these variations by calling a recourse verification method with \emph{personalized actionability constraints} that we elicit from each decision subject~\citep[via, e.g., an interface as in][]{wang2023gam}.
In practice, we can mitigate harm in consumer-facing applications without eliciting personalized constraints. This is because models may assign fixed predictions as a result of \emph{inherent actionability constraints} -- i.e., constraints that apply to all decision subjects and that practitioners could glean from a data dictionary (e.g., constraints that enforce physical limits or preserve a feature encoding). Seeing how inherent constraints represent a subset of personalized constraints, audits with inherent actionability constraints should be used to flag that a model inflicts harm rather than to certify that it is safe.

\paragraph{Algorithm Design and Pitfalls}

Methods for recourse verification should be designed to \emph{certify infeasibility}. This is an essential requirement for verification -- as it determines whether a method can prove that a model assigns a fixed prediction. The vast majority of existing methods for recourse provision are ill-suited for verification because they are not built to certify infeasibility. In practice, these methods will return outputs that are inconclusive or incorrect for recourse verification tasks. We refer to these instances as \emph{loopholes} and \emph{blindspots} and define them below.

\begin{definition}\normalfont
    Given a recourse verification task for a model $f$ for a point $\xb$ with the action set $\Aset{\xb}$, we say that a method returns a \emph{loophole} if its output violates actionability constraints.
\end{definition}

Methods for recourse provision return loopholes when they search for actions using an algorithm that cannot enforce all actionability constraints in a recourse verification task. For example, methods that search for recourse actions using gradient descent~\cite{mothilal2020explaining} will return loopholes in tasks that include one or more discrete actionability constraints in \cref{Table::ActionabilityConstraintsCatalog}.

\begin{definition}\normalfont
    Given a recourse verification task for a model $f$ for a point $\xb$ with the action set $\Aset{\xb}$, we say that a method exhibits a \emph{blindspot} if it fails to return an action for a point where $\Verify{\xp}{\clf{}} = \flagyes{}$.
\end{definition}

Methods for recourse provision exhibit blindspots when they cannot search exhaustively. Common algorithm design patterns that lead to blindspots include: (i) Searching for actions over observed data points \citep[see e.g.,][]{wachter2017counterfactual,nguyen2023feasible}; and enforcing actionability by post-hoc \emph{filtering} -- i.e., by generating a large collection of changes in feature space and filtering them to enforce actionability~\cite[see e.g.,][]{laugel2023achieving} -- which exhibits blindspots when the generation step is guaranteed to generate all possible actions.

\newsection{Verification with Reachable Sets}
\label{Sec::Methodology}
\label{Sec::Algorithms}

\newcommand{\immutableChanges}[1]{N_A(#1)}
\newcommand{\zb}{\bm{z}}
\newcommand{\isreachable}[1]{\mathsf{IsReachable}(#1)}
\newcommand{\banned}[0]{R}
\newcommand{\runningaset}[0]{A}

We introduce a model-agnostic approach for recourse verification. Our approach constructs \emph{reachable sets} -- i.e., sets of feature vectors that obey actionability constraints.
\begin{definition}
\label{Def::ReachableSet}\normalfont
Given a point $\xp$ and its action set $\Aset{\xp}$, a \emph{reachable set} contains all feature vectors that can be attained using the actions in $\Aset{\xb}$: $\Rset{\xp}:= \{ \xp + \ab \mid \ab \in \Aset{\xp} \}$.
\end{definition}
Given a reachable set $\Rset{\xp}$, we can certify that a model $f$ provides recourse to by querying its predictions on each point $\xb \in \Rset{\xp}$. Thus, we can write the verification function as:
\begin{align}
    \RsetVerify{\xp}{\clf{}}{R} = \begin{cases}
        \flagyes, & \textrm{if there exists a reachable point } \xq \in \Rset{\xp}
                   \text{ s.t. } \clf{\xq} = \ypos{}\\
        \flagno,  & \textrm{if}~\clf{\xq} = \yneg{}
                   \textrm{ for all reachable points } \xq \in \Rset{\xp}\\
        \flagidk{}, & \textrm{if}~\clf{\xq} = \yneg{}
                    \textrm{ for some reachable points }\xq \in R \subset \Rset{\xp}
    \end{cases}\label{Eq::ConfinementVerification}
\end{align}
Verification with reachable sets has three key benefits:
\begin{itemize}[label={},leftmargin=0pt]

    \item \emph{Model Agnostic Verification}: We can use reachable sets to verify recourse for \emph{any model class}. Model agnostic approaches are especially valuable for recourse verification because it is challenging, if not impossible, to develop a model-specific approach for complex model classes such as ensembles and deep neural networks. In particular, this stems from the fact that such a method would have to certify the infeasibility of a combinatorial optimization problem that encodes both the model and the actionability constraints. In practice, such problems be prohibitively large to solve in an audit -- as we would have to encode a complex decision boundary~\citep[see e.g.,][]{tjeng2019evaluating, parmentier2021optimal}. %

    \item \emph{Amortization}: In a recourse verification task where we have access to a suitable method for recourse verification, we may still wish to verify recourse using a reachable set. This is because, once we have constructed reachable sets, we can use them to verify recourse for as many models as we wish.

    \item \emph{Explicit Abstention}: In settings where we cannot enumerate a complete reachable set, we can use the interior approximation of the reachable set $R \subset \Rset{\xp}$. In this case, the procedure will certify recourse if it can find a feasible action. Otherwise, it will abstain -- thus, flagging $\xp$ as a \emph{potential} prediction without recourse.  We can exploit this property to speed up construction through a lazy initialization pattern. For example, rather than constructing a complete reachable set for every training example, we can construct an interior approximation $R \subset \Rset{\xp}$. In this setup, we would use the interior approximations to certify feasibility, and only construct the full reachable sets $R = \Rset{\xp}$ for points for which we would abstain $\RsetVerify{\xp}{\clf{}}{R} = \bot$.

\end{itemize}

\subsection{Construction}

\begin{wrapfigure}[11]{r}{.42\textwidth}
\vspace{-2.2em}
\begin{minipage}[t]{.4\textwidth}
\begin{algorithm}[H]
     \begin{algorithmic}[1]\small
     \Require{$\xp \in \X$, feature vector}
     \Require{$\Aset{\xp}$, action set for $\xp$}
     \Statex $\banned \gets \{\xp\}$
     \Statex $\runningaset \gets \Aset{\xp}$
     \While{$\FPD{\xp, \runningaset}$ is feasible}
     \State $\ab^{*} \gets \FPD{\xp, \runningaset}$ %
     \State $\banned{} \gets \banned \cup \{ \xp + \ab^{*} \}$ %
     \State $\runningaset \gets \runningaset \setminus \{ \ab^* \}$ %
     \EndWhile{}
     \Ensure{$\banned = \Rset{\xp}$} %
     \end{algorithmic}
     \caption{$\mathsf{GetReachableSet}$}
     \label{Alg::ReachableSetEnumeration}
\end{algorithm}
\end{minipage}
\end{wrapfigure}

In \cref{Alg::ReachableSetEnumeration}, we present a procedure to construct a reachable set for a given point by solving an optimization problem of the form:
\begin{align*}
\label{Eq::FixedPointDetection}
    \FPD{\xp, A} :=  \argmin \; \|\ab\| \; \st \;\; \ab \in \Aset{\xp} \setminus \{ {\bf 0} \}.
\end{align*}
We formulate $\FPD{\xp, A}$ as a mixed-integer program that we present in  \cref{Appendix::MIPFormulation}. Our formulation can encode all actionability constraints in \cref{Table::ActionabilityConstraintsCatalog} and is designed to be solved in a way that is fast and reliable using an off-the-shelf solver~\citep[see e.g.,][for a list]{gleixner2021miplib}.

Given a point $\xp$, the procedure enumerates all reachable points by repeatedly solving this problem and removing prior solutions by adding a ``no-good" constraint ~\citep[see e.g.,][]{sandholm2006nogood}. The procedure stops once $\FPD{\xp, A}$ is infeasible -- at which point it has enumerated all possible actions and thus reachable points. In practice, the procedure can be stopped when a user-specified stopping condition is met, in which case it would return an interior reachable set $R \subset \Rset{\xp}$ that can certify feasibility.

\paragraph{Decomposition}

Seeing how reachable sets grow exponentially with the number of features, \cref{Alg::ReachableSetEnumeration} may generate an incomplete reachable set that cannot certify infeasibility under reasonable time constraints.
We overcome this issue through decomposition -- i.e., by applying \cref{Alg::ReachableSetEnumeration} to subsets of features that can be altered independently for all points $\xb \in \X$.\footnote{Formally, we say that the subsets $S, T \subseteq [d]$ can are independent when the action set over $S \cup T$ is equivalent to the Cartesian product of action sets over $S$ and $T$ for all points $\xb \in \X$. For example, given the subsets $S, T$ where $S \cup T = [d]$, we have $\Aset{\xp} = A_{S}(\xp_S) \times A_{T}(\xp_T)$ for all $\xp = [\xp_S, \xp_T] \in \X$.}

Given any pair of features we can tell if they can be altered independently by seeing if they are linked through the constraints in the action set. In this way, we can construct the \emph{most granular partition} of features -- i.e., a collection of
$k \leq d$ feature subsets $\mathcal{M} := \{S_1, \ldots, S_k \}$ such that $\Aset{\xb} = \prod_{S \in \mathcal{M}} A_{S}(\xb_S)$. Given the partition $\mathcal{M}$, we construct a reachable set for each subset by calling \cref{Alg::ReachableSetEnumeration} with the action set $A_S(\xb_S)$, and recover the full reachable set as $R = \prod_{S \in \mathcal{M}} R_S$. %

Decomposition moderates the combinatorial explosion in our setting -- making it viable to enumerate reachable sets in practice. This strategy leads to considerable improvement in runtime, as we construct reachable sets for each subset by solving smaller instances of $\FPD{}$, and can construct the reachable set for a single feature subsets without solving a MIP.

\subsection{Auditing in Practice}

We can verify recourse in model development by constructing a reachable set for each point in a dataset. Once we have constructed reachable sets for each point, we can call recourse verification for any model by querying its predictions on reachable points as per \cref{Eq::ConfinementVerification}. In practice, the most time-consuming part of our approach stems from the construction of reachable sets. In our implementation, we can achieve a considerable speed up in construction through parallel computing and sharing reachable sets across points. In a task with immutable features, for example, we only need to construct and store a single reachable set for any points $\xb$ and $\xb'$ that only differ in terms of immutable feature values.

Given that our approach is designed to verify recourse with prediction queries, it may be time-consuming for models with a resource-intensive inference step. In such cases, we can minimize prediction queries through short-circuiting. In some settings, we can certify that a model provides recourse to a point analytically -- i.e., without querying its predictions on reachable points -- by applying the result in \cref{Thm::PrevalenceBasedRecourseRule}.

\begin{theorem}[label=Thm::PrevalenceBasedRecourseRule]
\label{Thm::PrevalenceBasedRecourseRule}
Suppose we have a dataset $\mathcal{D} = \{(\xb_i, y_i)\}_{i=1}^n$ with $\nplus{}$ positive examples, and a point $\xb$ with the reachable set $R \subseteq \Rset{\xp}$. In this case, every model $f: \X \to \Y$ will provide recourse to $\xp$ so long as its false negative rate over $\mathcal{D}$ obeys:
\begin{align*}
\mathsf{FNR}(\clf{}) < \frac{1}{\nplus{}} \sum_{i = 1}^n \indic{\xb_i \in R \; \land \; y_i = \ypos{}}
\end{align*}
\end{theorem}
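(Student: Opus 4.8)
The plan is a short counting argument by contradiction. First I would make the false negative rate explicit: by definition $\mathsf{FNR}(\clf{}) = \tfrac{1}{\nplus{}}\sum_{i=1}^n \indic{y_i = \ypos{} \land \clf{\xb_i} = \yneg{}}$, i.e. the number of positively labeled examples that $f$ maps to $\yneg{}$, normalized by $\nplus{}$. Abbreviate the right-hand side of the claimed bound as $m := \sum_{i=1}^n \indic{\xb_i \in R \,\land\, y_i = \ypos{}}$, the number of positively labeled training points lying inside the reachable set $R$. If $m = 0$ the hypothesis $\mathsf{FNR}(\clf{}) < 0$ is vacuous and there is nothing to prove, so assume $m \geq 1$.

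Next I would suppose, toward a contradiction, that $f$ does \emph{not} provide recourse to $\xp$. By \cref{Eq::ConfinementVerification} this means $\clf{\xq} = \yneg{}$ for every reachable point $\xq \in \Rset{\xp}$, and since $R \subseteq \Rset{\xp}$ by assumption, in particular $\clf{\xq} = \yneg{}$ for every $\xq \in R$. Each of the $m$ training points $\xb_i$ with $\xb_i \in R$ and $y_i = \ypos{}$ therefore satisfies $\clf{\xb_i} = \yneg{}$ while $y_i = \ypos{}$, i.e. it is a false negative of $f$. Hence the total number of false negatives of $f$ over $\mathcal{D}$ is at least $m$, which gives $\nplus{}\cdot \mathsf{FNR}(\clf{}) \geq m$, i.e. $\mathsf{FNR}(\clf{}) \geq m/\nplus{}$.

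This contradicts the hypothesis $\mathsf{FNR}(\clf{}) < m/\nplus{}$. Consequently the supposition is false, so there exists a reachable point $\xq \in R \subseteq \Rset{\xp}$ with $\clf{\xq} = \ypos{}$; that is, $f$ provides recourse to $\xp$, which is the claim.

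There is essentially no deep obstacle here — the argument is pigeonhole-style. The only points requiring a bit of care are: (i) keeping the two sums indexed over the same dataset $\mathcal{D}$ so that ``positive example inside $R$'' and ``false negative'' refer to the same points, which is what lets the counts be compared; (ii) using $R \subseteq \Rset{\xp}$ to pass from ``$f$ blanks out all of $\Rset{\xp}$'' to ``$f$ blanks out all of $R$''; and (iii) the degenerate case $m = 0$. The intuitive takeaway, which I would state, is that if enough ``easy'' positively labeled training points sit inside the reachable set, then no model with a sufficiently small false negative rate can assign $\yneg{}$ to the entire set, so recourse is guaranteed without querying $f$ on any reachable point.
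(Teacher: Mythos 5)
Your argument is correct and is essentially the paper's own proof: the paper phrases the same counting step as a pigeonhole argument (if $\mathsf{TP}(\clf{}) > \nplus{} - |S^+ \cap R|$ then some positive example in $R$ must be correctly classified), which is just the contrapositive of your observation that a model blanking out all of $R$ would have at least $m$ false negatives. Your handling of the degenerate case $m=0$ and the explicit use of $R \subseteq \Rset{\xp}$ are fine additions but do not change the substance.
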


\cref{Thm::PrevalenceBasedRecourseRule} highlights an alternative approach for recourse verification with reachable sets -- i.e., we can certify that a model $f$ must provide recourse to a point $x$ so long as the false negative rate does not exceed the density of positive examples in its reachable set. The values can be computed on \emph{any dataset} with labels -- be it the training dataset or a separate dataset. In practice, this approach may be useful when working with model classes where prediction queries are time-consuming. In practice, the result requires a dataset that is ``dense'' enough so that a reachable set for a point contains other labeled examples. When this condition holds, we can certify that a model provides recourse to a point by comparing the false negative rate of $f$ to the prevalence of positive examples in its reachable set.

\subsection{Discussion and Extensions}

Our methods are designed to construct reachable sets that can be used for recourse verification over discrete feature spaces. In principle, we can construct reachable sets for continuous feature spaces through sampling, but leave this as a topic for future work as it involves a probabilistic guarantee of infeasibility (see \cref{Sec::Discussion}).

Our methods may be useful as a tool to enforce actionability over continuous feature spaces. In particular, we can extend our formulation for $\FPD{}$ as a routine to test the feasibility of changes from existing methods to generate recourse actions and counterfactual explanations. In a case where such methods would suggest that a person can change their prediction by altering their features from $\xp$ to $\xq$, we can test the feasibility such changes with respect to actionability constraints by solving an optimization problem of the form:
\begin{align}%
    \isreachable{\xp, \xq, A} \quad := \quad \min \;\; 1 \quad \st \quad \xp = \xq - \ab, \; \ab \in \Aset{\xp}.
\end{align}
This routine can be used as a way to test for actionability in existing methods via post-hoc filtering. In such cases, the resulting procedure would allow practitioners to flag outputs that violate actionability constraints, and avoid the challenges of detecting loopholes. As we explain in \cref{Sec::Framework}, it would not be able to certify that recourse is infeasible.

\newsection{Experiments}
\label{Sec::Experiments}

\newcommand{\takeaway}[1]{\textit{#1}.~}

\newcommand{\confine}{{$\mathsf{Reach}$\xspace}}
\newcommand{\reach}{{$\mathsf{Reach}$\xspace}}
\newcommand{\ar}{{$\mathsf{AR}$\xspace}}
\newcommand{\dice}{{$\mathsf{DiCE}$\xspace}}

\newcommand{\LR}{{$\mathsf{LR}$}}
\newcommand{\RF}{{$\mathsf{RF}$}}
\newcommand{\XGB}{{$\mathsf{XGB}$}}
\newcommand{\DNN}{{$\mathsf{DNN}$}}

\newcommand{\Asimple}[1]{{\small\textsf{Simple}}}
\newcommand{\Aseparable}[1]{{\small\textsf{Separable}}}
\newcommand{\Anonseparable}[1]{{\small\textsf{Actual}}}
\newcommand{\NA}[1]{0.0}
\newcommand{\NAcell}[1]{\hspace{.5em}N/A}
\newcommand{\adddatainfo}[4]{\cell{l}{\textds{#1}\\$n=\num{#2} \;\; d={#3}$\\{#4}}}
\newcommand{\ficoinfo}[0]{\adddatainfo{heloc}{5842}{43}{\citet{fico}}}
\newcommand{\germaninfo}[0]{\adddatainfo{german}{1000}{36}{\citet{dua2019uci}}}
\newcommand{\givemecreditinfo}{\adddatainfo{givemecredit}{120268}{23}{\citet{data2018givemecredit}}}

\newcommand{\sublevel}{\reflectbox{\rotatebox[origin=c]{180}{$\Rsh$}}}
\newcommand{\metricsguide}[0]{\cell{l}{Certifies No Recourse\\Outputs Action\\ \sublevel~ Loopholes\\Outputs No Action\\ \sublevel~ Blindspots}}

We present experiments showing how predictions without recourse arise under inherent actionability constraints and how existing methods can fail to detect these instances.

\subsection{Setup}

We work with three classification datasets from consumer finance, where models that assign fixed predictions would preclude credit access (see \cref{Table::ExperimentalResults}). We process each dataset by encoding categorical attributes and discretizing continuous features. We use the processed dataset to fit a classification model using one of the following model classes: \emph{logistic regression} (\LR{}), \emph{XGBoost} (\XGB{}), and \emph{random forests} (\RF{}). We train each model using an 80\%/20\% train/test split and tune hyperparameters using standard $k$-CV. We report the performance of each model in \cref{Appendix::Experiments}.

We specify inherent actionability constraints for each dataset -- focusing on identifying indisputable conditions that apply to all individuals (e.g., compliance with physical limits, preserving feature encoding, enforcing deterministic causal effects, and preventing changes to protected attributes).
We list the constraints for each dataset in \cref{Appendix::Experiments}. We note that the constraints for all datasets include a mix of separable constraints (e.g. immutability, integrality, monotonicity) as well as non-separable constraints (e.g., encoding presentation, deterministic causal effects).

We construct reachable sets for each point in the dataset using \cref{Alg::ReachableSetEnumeration}. We use the reachable sets to identify individuals who are assigned a prediction without recourse by any one of the models. The results from reachable sets reflect the ground-truth feasibility of recourse for each point and each model class. We label our results as \confine{} and use them to benchmark the reliability of two salient methods to generate recourse actions and counterfactual explanations:
\begin{itemize}
    \item \ar{}~\citep{ustun2019actionable}, a \emph{model-specific} method that can certify infeasibility for linear classifiers and handle separable actionability constraints,
    \item \dice{}~\cite{mothilal2020explaining}, a \emph{model-agnostic} method that handles some separable actionability constraints.
\end{itemize}

\subsection{Results and Discussion}

\newcommand{\textmn}[1]{{\small\textsf{#1}}}
\begin{table*}[t!]
    \vspace{-2em}
    \centering
    \iftoggle{arxiv}{
    \resizebox{0.95\linewidth}{!}{\begin{tabular}{llccccccccc}
\multicolumn{2}{c}{ } & \multicolumn{3}{c}{\LR{}} & \multicolumn{3}{c}{\XGB{}{}} & \multicolumn{3}{c}{\RF{}} \\
\cmidrule(l{3pt}r{3pt}){3-5} \cmidrule(l{3pt}r{3pt}){6-8} \cmidrule(l{3pt}r{3pt}){9-11}
Dataset & Metrics & \reach{} & \ar{} & \dice{} & \reach{} & \ar{} & \dice{} & \reach{} & \ar{} & \dice{}\\
\midrule
\ficoinfo{} & \metricsguide{} & \cell{r}{22.2\%\\77.8\%\\\textbf{0.0\%}\\22.2\%\\\textbf{0.0\%}} & \cell{r}{---\\85.9\%\\\textcolor{\pitfall}{41.1\%}\\14.1\%\\0.0\%} & \cell{r}{---\\57.6\%\\\textcolor{\pitfall}{34.4\%}\\42.4\%\\\textcolor{\pitfall}{21.0\%}} & \cell{r}{22.3\%\\77.7\%\\\textbf{0.0\%}\\22.3\%\\\textbf{0.0\%}} & NA & \cell{r}{---\\57.3\%\\\textcolor{\pitfall}{42.1\%}\\42.7\%\\\textcolor{\pitfall}{21.1\%}} & \cell{r}{31.3\%\\68.7\%\\\textbf{0.0\%}\\31.3\%\\\textbf{0.0\%}} & NA & \cell{r}{---\\49.3\%\\\textcolor{\pitfall}{29.5\%}\\50.7\%\\\textcolor{\pitfall}{19.8\%}}\\
\midrule

\germaninfo{} & \metricsguide{} & \cell{r}{7.4\%\\92.6\%\\\textbf{0.0\%}\\7.4\%\\\textbf{0.0\%}} & \cell{r}{---\\91.7\%\\\textcolor{\pitfall}{2.2\%}\\8.3\%\\\textcolor{\pitfall}{1.3\%}} & \cell{r}{---\\92.1\%\\\textcolor{\pitfall}{16.6\%}\\7.9\%\\\textcolor{\pitfall}{0.9\%}} & \cell{r}{7.1\%\\92.9\%\\\textbf{0.0\%}\\7.1\%\\0.0\%} & NA & \cell{r}{---\\93.3\%\\\textcolor{\pitfall}{23.1\%}\\6.7\%\\0.0\%} & \cell{r}{28.6\%\\71.4\%\\\textbf{0.0\%}\\28.6\%\\\textbf{0.0\%}} & NA & \cell{r}{---\\68.0\%\\\textcolor{\pitfall}{24.0\%}\\32.0\%\\\textcolor{\pitfall}{3.4\%}}\\
\midrule

\givemecreditinfo{} & \metricsguide{} & \cell{r}{15.6\%\\84.4\%\\\textbf{0.0\%}\\15.6\%\\\textbf{0.0\%}} & \cell{r}{---\\84.4\%\\\textcolor{\pitfall}{40.7\%}\\15.6\%\\0.0\%} & \cell{r}{---\\79.7\%\\\textcolor{\pitfall}{34.6\%}\\20.3\%\\\textcolor{\pitfall}{4.7\%}} & \cell{r}{16.5\%\\83.5\%\\\textbf{0.0\%}\\16.5\%\\\textbf{0.0\%}} & NA & \cell{r}{---\\78.5\%\\\textcolor{\pitfall}{34.7\%}\\21.5\%\\\textcolor{\pitfall}{5.0\%}} & \cell{r}{0.2\%\\99.8\%\\\textbf{0.0\%}\\0.2\%\\\textbf{0.0\%}} & NA & \cell{r}{---\\97.7\%\\\textcolor{\pitfall}{57.7\%}\\2.3\%\\\textcolor{\pitfall}{2.3\%}}\\

\end{tabular}}
    }{
    \resizebox{\linewidth}{!}{}
    }
    \caption{Overview of results for all datasets, model classes, and methods. For each dataset and model class, we use \reach{} to determine individuals who are assigned predictions without recourse. We use these results to reliability of \ar{} and \dice{} for recourse verification tasks. We evaluate each method in terms of the percentage of points where it: \emph{certifies no recourse}; \emph{outputs an action}; outputs a \emph{loophole}, i.e., an action that violates actionability constraints; \emph{outputs no action}; exhibits a \emph{blindspot}, i.e., outputs no action when recourse exists. Here, each metric is expressed as a percentage of the points that are assigned a negative prediction by a model.}
    \vspace{-1em}
    \label{Table::ExperimentalResults}
\end{table*}

\paragraph{On Predictions without Recourse}

We summarize our results for each dataset, method, and model class
in \cref{Table::ExperimentalResults}. Our results show that models assign fixed predictions under inherent actionability constraints. In practice, individuals who are assigned predictions without recourse may vary drastically across models that perform equally well. Seeing how reachable sets do not change across models, these differences arise from the different decision boundaries of each model.

\paragraph{On Loopholes}

Our results in \cref{Table::ExperimentalResults} show how methods to generate recourse actions may output \emph{loopholes} -- i.e., actions that violate actionability constraints. In particular, this failure mode affects between 2.2\% to 57.7\% of individuals across datasets, methods, and models. As we describe in \cref{Sec::Framework}, methods return loopholes when they cannot enforce \emph{all} actionability constraints in a prediction task. In this case, we note that \ar{} and \dice{} can enforce separable actionability constraints. Thus, the loopholes arise from constraints that affect multiple features.

Loopholes reflect silent failures that undermine the benefits of recourse provision and may inflict harm. Consider a consumer finance application where we use \ar{} or \dice{} to provide consumers with actions that they can perform to qualify for a loan. In this case, loopholes that are left undetected would lead us to present consumers with recourse actions that are fundamentally impossible. On the \textds{heloc} dataset, for example, we find that \dice{} returns a loophole for 42.1\% of individuals who are denied credit by an \XGB{} model. Although some loopholes may be easy to spot through visual inspection or a basic immutability check, this is not always the case. In this case, $\approx$ 27\% of individuals receive an action that alters 5 or more features simultaneously -- many of them can only be detected reliably through a programmatic approach that can test if they meet actionability constraints.

\paragraph{On the Illusion of Feasibility}

Our experiments show that recourse often \emph{appears} to be feasible when methods are only able to enforce simple constraints. We study this effect in \cref{Appendix::Experiments} through an ablation study where we audit models for the \textds{heloc} dataset under special classes of actionability constraints.
Our results show that methods return loopholes for individuals with fixed predictions under simple constraints such as immutability and monotonicity, and that infeasibility only arises once methods can enforce more complex constraints. In this case, we find that \LR{} assigns a prediction without recourse to 22.2\% of individuals. If we enforce monotonicity and integrality constraints, however, recourse \emph{appears} to be feasible for $\approx 99\%$ points when using \ar{}. %

\paragraph{On Blindspots}

Our results show how existing methods for recourse provision may return results that are inconclusive or incorrect for verification. In \cref{Table::ExperimentalResults}, we \textcolor{\pitfall}{highlight} this failure mode by reporting the prevalence of \emph{blindspots} -- i.e., the proportion of instances where a method fails to return a recourse action for an individual who has recourse. On the \textds{heloc} dataset, for example, we find that \dice{} fails to find an action for 42.7\% of individuals who are denied by the \XGB{} model. In this case, \dice{} returns an error message ``no counterfactuals found for the given configuration, perhaps try with different parameters...'' Our analysis shows that nearly half of these cases (21.1\% of 42.7\%) correspond to blindspots while the other half are predictions without recourse (21.6\%).

Blindspots differ from loopholes in that they represent an ``overt'' failure mode that is unlikely to inflict harm. In practice, these failures are more likely to stump practitioners who find that a method fails to find a recourse action. In such cases, the key issue is attribution, as practitioners cannot determine whether the failure is due to (1) a prediction without recourse; (2) the search algorithm used to search for actions; (3) a bug in the recourse provision package; (4) a bug in their code. More broadly, the results highlight the value of designing methods that can certify infeasibility -- as methods that could certify infeasibility could provide evidence of preclusion in such cases.

\begin{figure}[t!]
    \vspace{-1.5em}
    \centering
    \resizebox{\linewidth}{!}{
     \includegraphics[trim=0in 0in 0in 0in, clip, width=.5\linewidth]{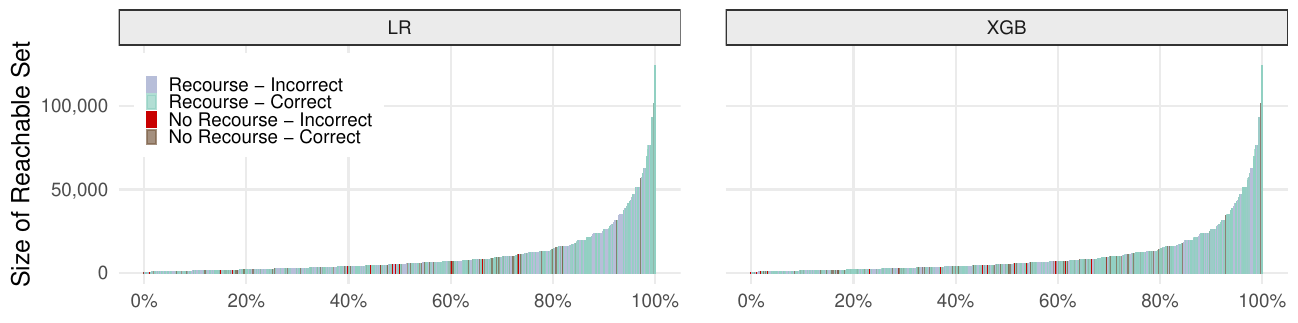}
    }
    \caption{Reachable sets for each point in the \textds{heloc} dataset, ordered from smallest to largest along the $x$-axis. We show predictions without recourse in red and highlight incorrectly predicted points in darker colors. As shown, predictions without recourse are prevalent across all reachable set sizes and can vary significantly between classifiers.}
    \label{Fig::PrevalenceDistributionFICO}
\end{figure}

\paragraph{On Interventions to Mitigate Preclusion}

Our results show how recourse verification can guide heuristic interventions that mitigate preclusion. At a minimum, we can use the results from an audit for model selection -- i.e., to choose a model that minimizes preclusion among models that are almost equally accurate. On the \textds{heloc} dataset, for example, we find that \RF{} and \XGB{} models have a test AUC $\approx$ 0.780 but an 11\% difference in the predictions without recourse -- thus, we can reduce preclusion without compromising performance by simply choosing to deploy an \XGB{} model over an \RF{} model. Seeing how reachable sets measure preclusion through prediction queries, we can apply this strategy in earlier stages of model development -- e.g., we can search for model hyperparameters that minimize preclusion by defining a custom metric to compute the ``preclusion rate.'' In this case, we note that parameters that control the decision boundary of the model can lead to substantial differences in preclusion without compromising training accuracy -- as we only need to assign a target prediction to a reachable point rather than the current point. %
In general, we can mitigate preclusion by defining features to promote actionability or by dropping features that lead to fixed predictions. In contrast to the previous interventions, this may require constructing a new collection of reachable sets at each iteration.

\newsection{Concluding Remarks and Limitations}
\label{Sec::Discussion}

Our paper highlights how machine learning models can assign fixed predictions as a result of actionability constraints, and describes how such predictions can lead to preclusion in consumer-facing applications such as lending and hiring. Our work proposes to address these failure modes by developing methods for a task called recourse verification.

Recourse verification broadly represents a new direction for research in algorithmic recourse -- i.e., as a model auditing procedure to certify the responsiveness of predictions with respect to actions. %
The methods in this paper are designed for recourse verification over discrete feature spaces and deterministic actionability constraints, but should be extended to address the following limitations:
\begin{itemize}

    \item Our methods are designed to certify infeasibility with respect to actions over discrete feature spaces -- and cannot certify infeasibility with respect to actions on continuous features. In principle, it is possible to construct reachable sets that certify infeasibility over continuous feature spaces by sampling. We leave this topic for future work as it requires a different algorithm and returns a probabilistic guarantee of infeasibility.

    \item Our methods do not consider probabilistic causal effects -- i.e., where actions on a feature \emph{may} incite changes on downstream features in a probabilistic causal model~\citep[see, e.g.,][]{karimi2021algorithmic,konig2022improvement,dominguez2022adversarial,von2022fairness}. Although our methods may be useful to generate actionable interventions in this setting, a reliable method for verification should return a probabilistic guarantee of infeasibility that accounts for potential misspecification in the causal model.

\end{itemize}

\clearpage
\subsubsection*{Ethics Statement}

Our work highlights how machine learning models can assign fixed predictions as a result of actionability, and proposes a task called recourse verification to reliably detect such instances.

We study recourse verification as a model auditing procedure that practitioners and auditors can use to detect preclusion in consumer-facing applications such as lending and hiring. The normative basis for the right to access in such applications stems from principles such as equality of opportunity (e.g., in hiring) and universal access (e.g., for basic health insurance). In other words, these are applications where we would want to safeguard access -- even if it comes at a cost -- because it reflects the kind of society we want to build. In practice, ensuring access may not impose any cost. In lending, for example, lenders only collect labels for consumers who are approved for loans~\citep[due to selective labeling]{lakkaraju2017selective,de2018learning,wei2021decision}. Thus, consumers assigned predictions without recourse cannot generate labels that would signal creditworthiness~\citep[]{chien2021learning}. In the United States, such effects have cut off credit access for large consumer segments whose creditworthiness is unknown~\citep[see, e.g., 26M ``credit invisibles'']{politico2018creditgap}.

Our paper primarily studies auditing models in lending and hiring with respect to indisputable constraints that apply to all decision subjects -- \emph{inherent actionability constraints}. Our recommendation is based on the fact that such constraints can be gleaned from a data dictionary, that claims surrounding preclusion should be indisputable, and that models may lead to preclusion as a result of such constraints. Given that individuals in these applications will face additional actionability constraints, the results of such an audit should be used to flag models that preclude access rather than to certify that models are safe.

In applications where elicitation is possible, our proposed approach can support a number of practices to handle assumptions surrounding actionability in a way that promotes transparency, contestability, and participatory design. In particular, individuals can express their constraints in natural language -- allowing stakeholders to scrutinize and contest them even without technical expertise in machine learning. In the event that stakeholders disagree on actionability constraints, we recommend determining if their disagreements affect claims of infeasibility through an ablation study. In such cases, we can run verification using the subset of “consensus constraints” that all stakeholders agree on. In this worst case, we may still find that models lead to preclusion since the “consensus constraints” will always contain inherent constraints.

\subsubsection*{Acknowledgments}
This work is supported by the National Science Foundation (NSF) under grant IIS-2313105.

{
\bibliographystyle{ext/plainnat}
\small\bibliography{reachable_sets}
}

\clearpage
\appendix
\onecolumn
\renewcommand\thepart{}
\renewcommand\partname{}
\iftoggle{arxiv}{\renewcommand\partname{Appendices}}
\part{\texorpdfstring{}{Appendices}} %

{
\renewcommand\ptctitle{Supplementary Material}
\parttoc
}
\thispagestyle{empty}

\section{Proof of Theorem \ref{Thm::PrevalenceBasedRecourseRule}}
\label{Appendix::Proofs}

\begin{theorem}[continues=Thm::PrevalenceBasedRecourseRule]
Consider a recourse verification task where we have a dataset $\mathcal{D} = \{(\xb_i, y_i)\}_{i=1}^n$ with $\nplus{}$ positive example. In this case, every model $f: \X \to \Y$ will provide recourse to a point $\xb$ with the reachable set $\Rset{\xb}$ so long as its false negative rate over $\mathcal{D}$ obeys:
\begin{align}
\mathsf{FNR}(\clf{}) < \frac{1}{\nplus{}} \sum_{i = 1}^n \indic{\xb_i \in \Rset{\xb} \; \land \; y_i = \ypos{}}\label{Eq::PrevalenceCondition}
\end{align}
\end{theorem}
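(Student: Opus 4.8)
The plan is to argue by contraposition via an elementary counting argument. First I would fix notation: write $R := \Rset{\xb}$ for the reachable set and let $P := \{\, i \in [n] : \xb_i \in R \ \text{and}\ y_i = \ypos{} \,\}$ be the set of indices of positive training examples that lie in the reachable set, so the right-hand side of \cref{Eq::PrevalenceCondition} is exactly $|P| / \nplus{}$. Recall that, by definition, $f$ \emph{provides recourse} to $\xb$ precisely when there is some reachable point $\xq \in R$ with $\clf{\xq} = \ypos{}$, and that $\mathsf{FNR}(\clf{}) = \frac{1}{\nplus{}} \sum_{i=1}^n \indic{y_i = \ypos{} \ \land\ \clf{\xb_i} = \yneg{}}$, i.e. the number of positive examples that $f$ misclassifies, normalized by $\nplus{}$.

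Next I would establish the contrapositive. Suppose $f$ does \emph{not} provide recourse to $\xb$, i.e. $\clf{\xq} = \yneg{}$ for every $\xq \in R$. Then for each $i \in P$ we have $\xb_i \in R$, hence $\clf{\xb_i} = \yneg{}$, while $y_i = \ypos{}$; so every index in $P$ contributes a false negative. Therefore the number of false negatives of $f$ on $\mathcal{D}$ is at least $|P|$, and dividing by $\nplus{}$ gives $\mathsf{FNR}(\clf{}) \geq |P| / \nplus{}$, which is exactly the negation of \cref{Eq::PrevalenceCondition}. Contraposition then yields the claim: whenever \cref{Eq::PrevalenceCondition} holds, $f$ must provide recourse to $\xb$.

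Finally I would remark that the argument uses nothing about $R$ beyond which training points it contains, so it applies verbatim with any interior approximation $R \subseteq \Rset{\xb}$ in place of the full reachable set — a recourse point found in $R$ is still a point of $\Rset{\xb}$ — which recovers the statement as phrased in the main text. I do not expect a genuine obstacle here: the result is essentially a one-line pigeonhole observation, and the only thing that needs care is the bookkeeping between the ``rate'' and ``normalized count'' forms of $\mathsf{FNR}$ and the fact that the bound is strict, so that a model meeting the hypothesis cannot simply tie the threshold.
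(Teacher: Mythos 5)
Your proof is correct and is essentially the paper's own argument: the paper phrases it as a pigeonhole step (if $\mathsf{TP}(\clf{}) > \nplus{} - |S^+ \cap R|$ then some positive example in the reachable set must be predicted positive, which is an actionable recourse point), which is exactly the contrapositive of your counting that a model with no recourse must misclassify every positive example in $\Rset{\xb}$. Your closing remark about the argument holding for any interior approximation $R \subseteq \Rset{\xb}$ also matches the paper's statement in the main text, so there is nothing to add.
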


\begin{proof}
The proof follows from an application of the pigeonhole principle over the set of positive examples $S^+ := \{\xb_i \mid y_i = +1, i \in [n] \}$.
Given a classifier $\clf{}$, denote the number of true positive and false negative predictions over $S^+$ as:
\begin{align*}
    \mathsf{TP}(\clf{}) &:= \sum_{i=1}^n \indic{\clf{\xp_i} = +1 \land y_i = +1} \qquad \mathsf{FN}(\clf{}) := \sum_{i=1}^n \indic{\clf{\xp_i} = -1 \land y_i = +1}.
\end{align*}
Consider a region of feature space $R \subseteq \Rset{\xp}$ for which the number of correct positive predictions exceeds the number of positive examples outside $R$ so that: $$\mathsf{TP}(\clf{}) > n^+ - |S^+ \cap R|.$$
In this case, the pigeonhole principle ensures that the classifier $\clf{}$ must assign a correct prediction to at least one of the positive examples in $R$ -- i.e., there exists a point $\xp' \in S^+ \cap R$ such that $\clf{}(\xp') = y_i = +1$. Given $R \subseteq \Rset{\xp}$, we have that $\xp \in \Rset{\xp}$. Thus, we can reach $\xp'$ from $\xp$ by performing the action $\ab = \xp' - \xp$ -- i.e., we can change the prediction from $\clf{}(\xp) = -1$ to $\clf{}(\xp+\ab) = +1$.

We recover the condition in the statement of the Theorem as follows:
\begin{align}
    \mathsf{TP}(\clf{}) &> n^+ - |S^+ \cap R| \label{Eq::TP-Cond-Start} \\
    \mathsf{FN}(\clf{}) &< |S^+ \cap R|, \label{Eq::FN-Cond}\\
    \mathsf{FNR}(\clf{}) &<  \frac{1}{n^+} \sum_{i = 1}^n \indic{\xb_i \in R \; \land \; y_i = +1} \label{Eq::FN-Cond-3}
\end{align}
Here, we Eqn. \eqref{Eq::FN-Cond} uses the fact that $\mathsf{TP}(\clf{}) = \nplus{} - \mathsf{FN}(\clf{})$, and \eqref{Eq::FN-Cond-3} divides both sides by $\frac{1}{\nplus{}}$. The result follows by applying the definition of the false negative rate.
\end{proof}

\clearpage
\section{Reachable Set Generation}
\label{Appendix::MIPFormulation}
\label{Appendix::Algorithms}

In this section, we describe how to formulate the optimization problems in \cref{Sec::Methodology} as mixed-integer programs. We start by presenting a MIP formulation for the optimization problem we solve in the $\FPD{\xp, \Aset{\xp}}$ and $\isreachable{\xp, \xq, \Aset{\xp}}$ routines. Next, we describe how this formulation can be extended to the complex actionability constraints in \cref{Table::ActionabilityConstraintsCatalog}.

\subsection{MIP Formulations}

\renewcommand{\J}{\mathcal{J}}
\newcommand{\Ja}{\mathcal{J}_{A}}
\newcommand{\intrange}[1]{[#1]}
\newcommand{\epsmin}{\varepsilon_\textrm{min}}
\newcommand{\Knogood}{\mathcal{A}^\textrm{opt}}
\newcommand{\dpos}[2]{\delta_{#1,#2}^{+}}
\newcommand{\dneg}[2]{\delta_{#1,#2}^{-}}
\newcommand{\upind}[1]{u_{#1}}
\newcommand{\aopt}[1]{\ab_{#1}}

Given a point $\xp \in \X$, an action set $\Aset{\xp}$, and a set of previous optima $\Knogood{}$, we can formulate $\FPD{\xp, \Aset{\xp}}$ as the following mixed-integer program:
\begin{subequations}
\label{MIP::FP}
\footnotesize
\renewcommand{\arraystretch}{1.5}
\begin{equationarray}{@{}c@{}r@{\,}c@{\,}l>{\,}l>{\,}l@{\;}}
\min_{\ab} & \quad \sum_{j\in\intrange{d}} &  & \ajpos + \ajneg &  & \notag \\[1.5em]
\st

& \ajpos & \geq & a_j & j \in \intrange{d} & \mipwhat{positive component of $a_j$} \label{Con::FP::AbsValue1}\\
& \ajneg & \geq & -a_j & j \in \intrange{d} & \mipwhat{negative component of $a_j$} \label{Con::FP::AbsValue2} \\
& a_j  & = & a_{j,k} + \dpos{j}{k} - \dneg{j}{k} &  j \in \intrange{d}, \aopt{k}\in \Knogood    & \mipwhat{distance from prior actions \label{Con::FP::ReachableSet1}}\\
& \epsmin{} & \leq & \sum_{j \in \intrange{d}} (\dpos{j}{k} +  \dneg{j}{k}) & \aopt{k}\in \Knogood  & \mipwhat{any solution is  $\epsmin$ away from $\aopt{k}$} \label{Con::FP::ReachableSet2}\\
& \dpos{j}{k} & \leq & M^{+}_{j,k} \upind{j,k}   & j \in \intrange{d}, \aopt{k}\in \Knogood & \mipwhat{$\dpos{j}{k} >0 \implies \upind{j,k} = 1 $} \label{Con::FP::ReachableSet3}\\
 & \dneg{j}{k} & \leq & M^{-}_{j,k}(1 - \upind{j,k})  & j \in \intrange{d}, \aopt{k}\in \Knogood  & \mipwhat{$\dneg{j}{k}>0 \implies \upind{j,k} = 0$} \label{Con::FP::ReachableSet4}\\

& a_j & \in & \Ajset{\xb} & j \in \intrange{d} & \mipwhat{separable actionability constraints on $j$} \label{Con::FP::SepActionSet}  \\
& \ajpos, \ajneg & \in & \R_{+} & j \in \intrange{d} & \mipwhat{absolute value of $a_j$} \label{Con::FP::Abs1}  \\
& \dpos{j}{k},\dneg{j}{k} & \in & \R_{+} & j \in \intrange{d}  & \mipwhat{signed distances from $a_{j,k}$} \label{Con::FP::ReachableSetAbs1}\\ 
& \upind{j,k} & \in & \B & j \in \intrange{d}  & \mipwhat{$\upind{j,k} := 1[\dpos{j}{k} > 0]$} \label{Con::FP::ReachableSetAbs3}

\label{Con::FP::IndAjpos}
\end{equationarray} 
\end{subequations}

The formulation searches for an action in the set $ \ab \in \Aset{\xp} / \Knogood$ by combining two kinds of constraints: (i) constraints to restrict actions $\ab \in \Aset{\xp}$ and (ii) constraints to rule out actions in $\ab \in \Knogood{}$. 

The formulation encodes the separable constraints in $\Aset{\xp}$ -- i.e., a constraint that can be enforced for each feature. The formulation must be extended with additional variables and constraints to handle constraints as discussed in \cref{MIP::EncodingActionability}. These constraints are handled through the $a_j \in \Ajset{\xb}$ conditions in Constraint \ref{Con::FP::SepActionSet}. This constraint can handle a number of actionability constraints that can be passed solver when defining the variables $a_j$, including \emph{bounds} (e.g., $a_j \in [-x_j, 10-x_j]$), \emph{integrality} (e.g., $a_j \in \{0,1\}$ or $a_j \in \{L-x_j, L -x_j+1,\ldots, U-x_j\}$), and \emph{monotonicity} (e.g., $a_j \geq 0$ or $a_j \leq 0$).

The formulation rules out actions in $\ab \in \Knogood{}$ through the ``no good" constraints in Constraints \eqref{Con::FP::ReachableSet1} to \eqref{Con::FP::ReachableSet4}. Here, Constraint \eqref{Con::FP::ReachableSet2} ensures feasible actions from previous solutions by at least $\epsmin{}$. We set to a sufficiently small number $\epsmin{}:= 10^-6$ by default, but use larger values when working with discrete feature sets (e.g., $\epsmin = 1$ for cases where every actionable feature is binary or integer-valued). Constraints \eqref{Con::FP::ReachableSet3} and \eqref{Con::FP::ReachableSet4} ensure that either $\dpos{j}{k} > 0$ or $\dneg{j}{k} > 0$. These are ``Big-M constraints" where the Big-M parameters can be set to represent the largest value of signed distances. Given an action $a_j \in [a^\textrm{LB}_{j}, a^\textrm{UB}_{j}]$, we can set $M^+_{j,k} := |a^\textrm{UB}_{j} - a_{j, k})$ and $M^-_{j,k} := |a_{j, k} - a^\textrm{LB}_{j}|$.

The formulation chooses each action in $\ab \in \Aset{\xp} / \Knogood$ to minimize the $L_1$ norm. We compute the $L_1$-norm component-wise as $|a_j|:= \ajpos + \ajneg$ where the variables $\ajpos$ and $\ajneg$ are set to the positive and negative components of $|a_j|$ in Constraints \eqref{Con::FP::AbsValue1} and \eqref{Con::FP::AbsValue2}. This choice of objective is meant to induce sparsity among the actions we recover by repeatedly solving \cref{Alg::ReachableSetEnumeration}.

\paragraph{MIP Formulation for $\mathsf{IsReachable}$}
Given a point $\xp \in \X$, an action set $\Aset{\xp}$, we can formulate the optimization problem for $\isreachable{\xp, \xq, \Aset{\xp}}$ as a special case of the MIP in \eqref{MIP::FP} in which we set $\Knogood{} = \emptyset{}$ and include the constraint $\ab = \xp - \xq$.  Given that the objective function does not affect the feasibility of the optimization problem, we can set the objective to 1 when solving the problem for $\mathsf{IsReachable}$. In this case, any feasible solution would certify that $\xq$ is reachable from $\xp$ using the actions in $\Aset{\xp}$. Thus, we can return $\isreachable{\xp, \xq, \Aset{\xp}} = 1$ if the MIP is feasible and $\isreachable{\xp, \xq, \Aset{\xp}} = 0$ if it is infeasible.

\subsection{Encoding Actionability Constraints}
\label{MIP::EncodingActionability}
\label{MIP::Encoding}

We describe how to extend the MIP formulation in \eqref{MIP::FP} to encode salient classes of actionability constraints. Our software includes an ActionSet API that allows practitioners to specify these constraints across each MIP formulation. %

\paragraph{Encoding Preservation for Categorical Attributes}
Many datasets contain subsets of features that reflect the underlying value of a categorical attribute. For example, we may encode a categorical attribute with $K = 3$ categories such $\textfn{marital\_status} \in \{single, married, other\}$ using a subset of $K - 1 = 2$ dummy variables such as \textfn{married} and \textfn{single}. In such cases, actions on the dummy variables must obey non-separable actionability constraints to preserve the encoding -- i.e., to ensure that a person cannot be \textfn{married} and \textfn{single} at the same time. 

We can enforce these conditions by adding the following constraints to the MIP Formulation in \eqref{MIP::FP}:
\begin{align} 
L \leq \sum_{j\in \J} x_j + a_{j} \leq U\label{Con::FP::SumOneHot} 
\end{align}
Here, $\J \subseteq [d]$ is the index set of features with encoding constraints, and $L$ and $U$ are lower and upper limits on the number of features in $\J$ that must hold to preserve an encoding. 

Given a standard one-hot encoding of a categorical variable with $K$ categories, $\J$ would contain the indices of $K - 1$ dummy variables for the $K-1$ categories other than the reference category. We would ensure that all actions preserve this encoding by setting $L = 0$ and $U = 1$. 

\paragraph{Implications and Deterministic Causal Effects}
Datasets often include features where actions on one feature will induce changes in the values and actions for other features. For example, in \cref{Table::ActionabilityConstraintsCatalog}, changing \textfn{is\_employed} from \textfn{FALSE} to \textfn{TRUE} would change the value of \textfn{work\_hrs\_per\_week} from 0 to a value $\geq 0$. 

We capture these conditions by adding variables and constraints that capture logical implications in action space. In the simplest case, these constraints would relate the values for a pair of features $j, j' \in [d]$ through an if-then condition such as: ``if $a_j \geq v_j$ then $a_j' = v_{j'}$". In such cases, we could capture this relationship by adding the following constraints to the MIP Formulation in \eqref{MIP::FP}:
\begin{align}
Mu &\geq a_j - v_j  + \epsilon \label{Con::FP::IfThen1} \\ 
M(1 - u) &\geq v_j - a_j \label{Con::FP::IfThen2} \\
u v_{j'} &= a_{j'} \label{Con::FP::IfThen3} \\
u &\in \B \notag
\end{align}
The constraints shown above capture the ``if-then" condition by introducing a binary variable $u := \indic{a_j \geq v_j}$. The indicator is set through the Constraints \eqref{Con::FP::IfThen1} and \eqref{Con::FP::IfThen2} where $M := a^\textrm{UB}_j - v_j$ and $\epsilon = 1e-5$. If the implication is met, then $a_{j'}$ is set to $v_{j'}$ through Constraint \eqref{Con::FP::IfThen3}. We apply this approach to encode a number of salient actionability constraints shown in \cref{Table::ActionabilityConstraintsCatalog} by generalizing the constraint shown above to a setting where: (i) the ``if" and ``then" conditions to handle subsets of features, and (ii) the implications link actions on mutable features to actions on an immutable feature (i.e. so that actions on a mutable feature \textfn{years\_since\_last\_application} will induce changes in an immutable feature \textfn{age}).

\paragraph{Generalized Reachability Constraints}
We end with a general-purpose solution to enforce arbitrary actionability constraints on discrete features. These constraints can be used, for example, to preserve a one-hot encoding of ordinal features (e.g., \textfn{max\_degree\_BS} and \textfn{max\_degree\_MS}) or a thermometer encoding (e.g., \textfn{monthly\_income\_geq\_2k}, \textfn{monthly\_income\_geq\_5k}, \textfn{monthly\_income\_geq\_10k}). 

We can formulate custom reachability constraints for the relevant features $\J \subset [d]$ given two parameters:
\begin{enumerate}
    \item Set of Viable Values: $V$, a set of all values that can be assigned to the features in $J$.
    \item Reachability Matrix: $E \in \{0,1\}^{k \times k}$, a matrix where $e_{i,j} = \indic{v_i ~\textrm{is reachable from}~v_j}$ for all $v_i, v_j \in V$. 
\end{enumerate}
Given these parameters, we constrain the reachability of features $j \in J$ by adding the following constraints to the MIP formulation in \eqref{MIP::FP}:
\begin{align}
a_j &=  \sum_{k\in E[i]} e_{i,k} a_{j,k} u_{j,k} \label{Con::FP::Subset1} \\ 
1 &= \sum_{k \in E[i]} u_{j,k} \label{Con::FP::Subset2} \\  
u_{j,k} &\leq e_{i,k} \label{Con::FP::Subset3} \\ 
u_{j,k} &\in \B \notag
\end{align}
Here, $u_{j,k} := \indic{\xp' \in V}$ indicates that we choose an action to attain point $\xp'\in V$. Constraint \eqref{Con::FP::Subset1} defines the set of reachable points from $i$, while Constraint \eqref{Con::FP::Subset1} ensures that only one such point can be selected. Here, $e_{i,k}$ is $i^\textrm{th}$ row of $E$ for point $i$ and $a_{j,k}:= x_j' - x_j$ is the action on feature $j$ to reach point $\xp' \in V$ from point $\xp$. 

We show an example of how to formulate reachability constraints to preserve a thermometer encoding in \cref{Fig::ThermoEncoding}.

\begin{figure}[h!]
\centering
\resizebox{\linewidth}{!}{
\begin{tabular}{cccc}
\multicolumn{3}{c}{$V$} & \\
\textfn{NetFractionRevolvingBurdenGeq90} &  \textfn{NetFractionRevolvingBurdenGeq60} &  \textfn{NetFractionRevolvingBurdenLeq30}  & $E$ \\ 
\cmidrule(lr){1-3} \cmidrule(lr){4-4}
$0$ & $0$ & $0$ & $[1, 1, 0, 0]$ \\
$1$ & $0$ & $0$ & $[0, 1, 0, 0]$ \\
$0$ & $1$ & $0$ & $[1, 1, 1, 0]$ \\
$0$ & $1$ & $1$ & $[1, 1, 1, 1]$
\end{tabular}
}
\caption{$V$ denotes valid combinations of features.  For these features, we wanted to produce actions that would reduce \texttt{NetFractionRevolvingBurden} for consumers. $E$ shows which points can be reached. For example, $[1, 1, 0, 0]$ represents point $[0, 0, 0]$ can be reached, and point $[1, 0, 0]$ can be reached, but no other points can be reached.}
\label{Fig::ThermoEncoding}
\end{figure}

\clearpage
\section{Supplemental Material for Experiments}
\label{Appendix::Experiments}

\newlist{constraints}{enumerate}{3}
\setlist[constraints]{label={\arabic*.},leftmargin=*}

\subsection{Actionability Constraints for the \textds{german} Dataset} 

We show a list of all features and their separable actionability constraints in \cref{Table::ActionSetGerman}.
\begin{table}[!h]
\centering
\fontsize{9pt}{9pt}\selectfont
\resizebox{0.75\linewidth}{!}{\begin{tabular}{llllll}
\toprule
\textheader{Name} & \textheader{Type} & \textheader{LB} & \textheader{UB} & \textheader{Actionability} & \textheader{Sign} \\
\midrule
\textfn{Age} & $\mathbb{Z}$ & 19 & 75 & No &  \\
\textfn{Male} & $\{0,1\}$ & 0 & 1 & No &  \\
\textfn{Single} & $\{0,1\}$ & 0 & 1 & No &  \\
\textfn{ForeignWorker} & $\{0,1\}$ & 0 & 1 & No &  \\
\textfn{YearsAtResidence} & $\mathbb{Z}$ & 0 & 7 & Yes & $+$ \\
\textfn{LiablePersons} & $\mathbb{Z}$ & 1 & 2 & No &  \\
\textfn{Housing$=$Renter} & $\{0,1\}$ & 0 & 1 & No &  \\
\textfn{Housing$=$Owner} & $\{0,1\}$ & 0 & 1 & No &  \\
\textfn{Housing$=$Free} & $\{0,1\}$ & 0 & 1 & No &  \\
\textfn{Job$=$Unskilled} & $\{0,1\}$ & 0 & 1 & No &  \\
\textfn{Job$=$Skilled} & $\{0,1\}$ & 0 & 1 & No &  \\
\textfn{Job$=$Management} & $\{0,1\}$ & 0 & 1 & No &  \\
\textfn{YearsEmployed$\geq$1} & $\{0,1\}$ & 0 & 1 & Yes & $+$ \\
\textfn{CreditAmt$\geq$1000K} & $\{0,1\}$ & 0 & 1 & No &  \\
\textfn{CreditAmt$\geq$2000K} & $\{0,1\}$ & 0 & 1 & No &  \\
\textfn{CreditAmt$\geq$5000K} & $\{0,1\}$ & 0 & 1 & No &  \\
\textfn{CreditAmt$\geq$10000K} & $\{0,1\}$ & 0 & 1 & No &  \\
\textfn{LoanDuration$\leq$6} & $\{0,1\}$ & 0 & 1 & No &  \\
\textfn{LoanDuration$\geq$12} & $\{0,1\}$ & 0 & 1 & No &  \\
\textfn{LoanDuration$\geq$24} & $\{0,1\}$ & 0 & 1 & No &  \\
\textfn{LoanDuration$\geq$36} & $\{0,1\}$ & 0 & 1 & No &  \\
\textfn{LoanRate} & $\mathbb{Z}$ & 1 & 4 & No &  \\
\textfn{HasGuarantor} & $\{0,1\}$ & 0 & 1 & Yes & $+$ \\
\textfn{LoanRequiredForBusiness} & $\{0,1\}$ & 0 & 1 & No &  \\
\textfn{LoanRequiredForEducation} & $\{0,1\}$ & 0 & 1 & No &  \\
\textfn{LoanRequiredForCar} & $\{0,1\}$ & 0 & 1 & No &  \\
\textfn{LoanRequiredForHome} & $\{0,1\}$ & 0 & 1 & No &  \\
\textfn{NoCreditHistory} & $\{0,1\}$ & 0 & 1 & No &  \\
\textfn{HistoryOfLatePayments} & $\{0,1\}$ & 0 & 1 & No &  \\
\textfn{HistoryOfDelinquency} & $\{0,1\}$ & 0 & 1 & No &  \\
\textfn{HistoryOfBankInstallments} & $\{0,1\}$ & 0 & 1 & Yes & $+$ \\
\textfn{HistoryOfStoreInstallments} & $\{0,1\}$ & 0 & 1 & Yes & $+$ \\
\textfn{CheckingAcct\_exists} & $\{0,1\}$ & 0 & 1 & Yes & $+$ \\
\textfn{CheckingAcct$\geq$0} & $\{0,1\}$ & 0 & 1 & Yes & $+$ \\
\textfn{SavingsAcct\_exists} & $\{0,1\}$ & 0 & 1 & Yes & $+$ \\
\textfn{SavingsAcct$\geq$100} & $\{0,1\}$ & 0 & 1 & Yes & $+$ \\
\bottomrule
\end{tabular}

}
\caption{Separable actionability constraints for the \textds{german} dataset.}
\label{Table::ActionSetGerman}
\end{table}

The non-separable actionability constraints for this dataset 
include:
\begin{constraints}
\item DirectionalLinkage: Actions on \textfn{YearsAtResidence} will induce to actions on [`\textfn{Age}']. Each unit change in \textfn{YearsAtResidence} leads to:1.00-unit change in \textfn{Age}
\item DirectionalLinkage: Actions on \textfn{YearsEmployed$\geq$1} will induce to actions on [`\textfn{Age}']. Each unit change in \textfn{YearsEmployed$\geq$1} leads to:1.00-unit change in \textfn{Age}
\item ThermometerEncoding: Actions on [\textfn{CheckingAcctexists}, \textfn{CheckingAcct$\geq$0}] must preserve thermometer encoding of CheckingAcct., which can only increase. Actions can only turn on higher-level dummies that are off, where \textfn{CheckingAcctexists} is the lowest-level dummy and \textfn{CheckingAcct$\geq$0} is the highest-level-dummy.
\item ThermometerEncoding: Actions on [\textfn{SavingsAcctexists}, \textfn{SavingsAcct$\geq$100}] must preserve thermometer encoding of SavingsAcct., which can only increase. Actions can only turn on higher-level dummies that are off, where \textfn{SavingsAcctexists} is the lowest-level dummy and \textfn{SavingsAcct$\geq$100} is the highest-level-dummy.
\end{constraints}

\subsection{Actionability Constraints for the \textds{heloc} Dataset} 

We show a list of all features and their separable actionability constraints in \cref{Table::IndividualActionSetHeloc}.
\begin{table}[!h]
\centering
\fontsize{9pt}{9pt}\selectfont
\resizebox{0.75\linewidth}{!}{\begin{tabular}{llllll}
\toprule
\textheader{Name} & \textheader{Type} & \textheader{LB} & \textheader{UB} & \textheader{Actionability} & \textheader{Sign} \\
\midrule
\textfn{ExternalRiskEstimate$\geq$40} & $\{0,1\}$ & 0 & 1 & No &  \\
\textfn{ExternalRiskEstimate$\geq$50} & $\{0,1\}$ & 0 & 1 & No &  \\
\textfn{ExternalRiskEstimate$\geq$60} & $\{0,1\}$ & 0 & 1 & No &  \\
\textfn{ExternalRiskEstimate$\geq$70} & $\{0,1\}$ & 0 & 1 & No &  \\
\textfn{ExternalRiskEstimate$\geq$80} & $\{0,1\}$ & 0 & 1 & No &  \\
\textfn{YearsOfAccountHistory} & $\mathbb{Z}$ & 0 & 50 & No &  \\
\textfn{AvgYearsInFile$\geq$3} & $\{0,1\}$ & 0 & 1 & Yes &  \\
\textfn{AvgYearsInFile$\geq$5} & $\{0,1\}$ & 0 & 1 & Yes &  \\
\textfn{AvgYearsInFile$\geq$7} & $\{0,1\}$ & 0 & 1 & Yes &  \\
\textfn{MostRecentTradeWithinLastYear} & $\{0,1\}$ & 0 & 1 & Yes &  \\
\textfn{MostRecentTradeWithinLast2Years} & $\{0,1\}$ & 0 & 1 & Yes &  \\
\textfn{AnyDerogatoryComment} & $\{0,1\}$ & 0 & 1 & No &  \\
\textfn{AnyTrade120DaysDelq} & $\{0,1\}$ & 0 & 1 & No &  \\
\textfn{AnyTrade90DaysDelq} & $\{0,1\}$ & 0 & 1 & No &  \\
\textfn{AnyTrade60DaysDelq} & $\{0,1\}$ & 0 & 1 & No &  \\
\textfn{AnyTrade30DaysDelq} & $\{0,1\}$ & 0 & 1 & No &  \\
\textfn{NoDelqEver} & $\{0,1\}$ & 0 & 1 & No &  \\
\textfn{YearsSinceLastDelqTrade$\leq$1} & $\{0,1\}$ & 0 & 1 & Yes &  \\
\textfn{YearsSinceLastDelqTrade$\leq$3} & $\{0,1\}$ & 0 & 1 & Yes &  \\
\textfn{YearsSinceLastDelqTrade$\leq$5} & $\{0,1\}$ & 0 & 1 & Yes &  \\
\textfn{NumInstallTrades$\geq$2} & $\{0,1\}$ & 0 & 1 & Yes &  \\
\textfn{NumInstallTradesWBalance$\geq$2} & $\{0,1\}$ & 0 & 1 & Yes &  \\
\textfn{NumRevolvingTrades$\geq$2} & $\{0,1\}$ & 0 & 1 & Yes &  \\
\textfn{NumRevolvingTradesWBalance$\geq$2} & $\{0,1\}$ & 0 & 1 & Yes &  \\
\textfn{NumInstallTrades$\geq$3} & $\{0,1\}$ & 0 & 1 & Yes &  \\
\textfn{NumInstallTradesWBalance$\geq$3} & $\{0,1\}$ & 0 & 1 & Yes &  \\
\textfn{NumRevolvingTrades$\geq$3} & $\{0,1\}$ & 0 & 1 & Yes &  \\
\textfn{NumRevolvingTradesWBalance$\geq$3} & $\{0,1\}$ & 0 & 1 & Yes &  \\
\textfn{NumInstallTrades$\geq$5} & $\{0,1\}$ & 0 & 1 & Yes &  \\
\textfn{NumInstallTradesWBalance$\geq$5} & $\{0,1\}$ & 0 & 1 & Yes &  \\
\textfn{NumRevolvingTrades$\geq$5} & $\{0,1\}$ & 0 & 1 & Yes &  \\
\textfn{NumRevolvingTradesWBalance$\geq$5} & $\{0,1\}$ & 0 & 1 & Yes &  \\
\textfn{NumInstallTrades$\geq$7} & $\{0,1\}$ & 0 & 1 & Yes &  \\
\textfn{NumInstallTradesWBalance$\geq$7} & $\{0,1\}$ & 0 & 1 & Yes &  \\
\textfn{NumRevolvingTrades$\geq$7} & $\{0,1\}$ & 0 & 1 & Yes &  \\
\textfn{NumRevolvingTradesWBalance$\geq$7} & $\{0,1\}$ & 0 & 1 & Yes &  \\
\textfn{NetFractionInstallBurden$\geq$10} & $\{0,1\}$ & 0 & 1 & Yes &  \\
\textfn{NetFractionInstallBurden$\geq$20} & $\{0,1\}$ & 0 & 1 & Yes &  \\
\textfn{NetFractionInstallBurden$\geq$50} & $\{0,1\}$ & 0 & 1 & Yes &  \\
\textfn{NetFractionRevolvingBurden$\geq$10} & $\{0,1\}$ & 0 & 1 & Yes &  \\
\textfn{NetFractionRevolvingBurden$\geq$20} & $\{0,1\}$ & 0 & 1 & Yes &  \\
\textfn{NetFractionRevolvingBurden$\geq$50} & $\{0,1\}$ & 0 & 1 & Yes &  \\
\textfn{NumBank2NatlTradesWHighUtilizationGeq2} & $\{0,1\}$ & 0 & 1 & Yes & $+$ \\
\bottomrule
\end{tabular}

}
\caption{Separable actionability constraints for the \textds{heloc} dataset.}
\label{Table::IndividualActionSetHeloc}
\end{table}

The non-separable actionability constraints for this dataset include:
\begin{constraints}
\item DirectionalLinkage: Actions on \textfn{NumRevolvingTradesWBalance$\geq$2} will induce to actions on [`\textfn{NumRevolvingTrades$\geq$2}']. Each unit change in \textfn{NumRevolvingTradesWBalance$\geq$2} leads to: 1.00-unit change in \textfn{NumRevolvingTrades$\geq$2}
\item DirectionalLinkage: Actions on \textfn{NumInstallTradesWBalance$\geq$2} will induce to actions on [`\textfn{NumInstallTrades$\geq$2}']. Each unit change in \textfn{NumInstallTradesWBalance$\geq$2} leads to: 1.00-unit change in \textfn{NumInstallTrades$\geq$2}
\item DirectionalLinkage: Actions on \textfn{NumRevolvingTradesWBalance$\geq$3} will induce to actions on [`\textfn{NumRevolvingTrades$\geq$3}']. Each unit change in \textfn{NumRevolvingTradesWBalance$\geq$3} leads to: 1.00-unit change in \textfn{NumRevolvingTrades$\geq$3}
\item DirectionalLinkage: Actions on \textfn{NumInstallTradesWBalance$\geq$3} will induce to actions on [`\textfn{NumInstallTrades$\geq$3}']. Each unit change in \textfn{NumInstallTradesWBalance$\geq$3} leads to: 1.00-unit change in \textfn{NumInstallTrades$\geq$3}
\item DirectionalLinkage: Actions on \textfn{NumRevolvingTradesWBalance$\geq$5} will induce to actions on [`\textfn{NumRevolvingTrades$\geq$5}']. Each unit change in \textfn{NumRevolvingTradesWBalance$\geq$5} leads to: 1.00-unit change in \textfn{NumRevolvingTrades$\geq$5}
\item DirectionalLinkage: Actions on \textfn{NumInstallTradesWBalance$\geq$5} will induce to actions on [`\textfn{NumInstallTrades$\geq$5}']. Each unit change in \textfn{NumInstallTradesWBalance$\geq$5} leads to: 1.00-unit change in \textfn{NumInstallTrades$\geq$5}
\item DirectionalLinkage: Actions on \textfn{NumRevolvingTradesWBalance$\geq$7} will induce to actions on [`\textfn{NumRevolvingTrades$\geq$7}']. Each unit change in \textfn{NumRevolvingTradesWBalance$\geq$7} leads to: 1.00-unit change in \textfn{NumRevolvingTrades$\geq$7}
\item DirectionalLinkage: Actions on \textfn{NumInstallTradesWBalance$\geq$7} will induce to actions on [`\textfn{NumInstallTrades$\geq$7}']. Each unit change in \textfn{NumInstallTradesWBalance$\geq$7} leads to: 1.00-unit change in \textfn{NumInstallTrades$\geq$7}
\item DirectionalLinkage: Actions on \textfn{YearsSinceLastDelqTrade$\leq$1} will induce to actions on [`\textfn{YearsOfAccountHistory}']. Each unit change in \textfn{YearsSinceLastDelqTrade$\leq$1} leads to: -1.00-unit change in \textfn{YearsOfAccountHistory}
\item DirectionalLinkage: Actions on \textfn{YearsSinceLastDelqTrade$\leq$3} will induce to actions on [`\textfn{YearsOfAccountHistory}']. Each unit change in \textfn{YearsSinceLastDelqTrade$\leq$3} leads to: -3.00-unit change in \textfn{YearsOfAccountHistory}
\item DirectionalLinkage: Actions on \textfn{YearsSinceLastDelqTrade$\leq$5} will induce to actions on [`\textfn{YearsOfAccountHistory}']. Each unit change in \textfn{YearsSinceLastDelqTrade$\leq$5} leads to: -5.00-unit change in \textfn{YearsOfAccountHistory}
\item ReachabilityConstraint: The values of [\textfn{MostRecentTradeWithinLastYear}, \textfn{MostRecentTradeWithinLast2Years}] must belong to one of 4 values with custom reachability conditions.
\item ThermometerEncoding: Actions on [\textfn{YearsSinceLastDelqTrade$\leq$1}, \textfn{YearsSinceLastDelqTrade$\leq$3}, \textfn{YearsSinceLastDelqTrade$\leq$5}] must preserve thermometer encoding of YearsSinceLastDelqTradeleq., which can only decrease. Actions can only turn off higher-level dummies that are on, where \textfn{YearsSinceLastDelqTrade$\leq$1} is the lowest-level dummy and \textfn{YearsSinceLastDelqTrade$\leq$5} is the highest-level-dummy.
\item ThermometerEncoding: Actions on [\textfn{AvgYearsInFile$\geq$3}, \textfn{AvgYearsInFile$\geq$5}, \textfn{AvgYearsInFile$\geq$7}] must preserve thermometer encoding of AvgYearsInFilegeq., which can only increase. Actions can only turn on higher-level dummies that are off, where \textfn{AvgYearsInFile$\geq$3} is the lowest-level dummy and \textfn{AvgYearsInFile$\geq$7} is the highest-level-dummy.
\item ThermometerEncoding: Actions on [\textfn{NetFractionRevolvingBurden$\geq$10}, \textfn{NetFractionRevolvingBurden$\geq$20}, \textfn{NetFractionRevolvingBurden$\geq$50}] must preserve thermometer encoding of NetFractionRevolvingBurdengeq., which can only decrease. Actions can only turn off higher-level dummies that are on, where \textfn{NetFractionRevolvingBurden$\geq$10} is the lowest-level dummy and \textfn{NetFractionRevolvingBurden$\geq$50} is the highest-level-dummy.
\item ThermometerEncoding: Actions on [\textfn{NetFractionInstallBurden$\geq$10}, \textfn{NetFractionInstallBurden$\geq$20}, \textfn{NetFractionInstallBurden$\geq$50}] must preserve thermometer encoding of NetFractionInstallBurdengeq., which can only decrease. Actions can only turn off higher-level dummies that are on, where \textfn{NetFractionInstallBurden$\geq$10} is the lowest-level dummy and \textfn{NetFractionInstallBurden$\geq$50} is the highest-level-dummy.
\item ThermometerEncoding: Actions on [\textfn{NumRevolvingTradesWBalance$\geq$2}, \textfn{NumRevolvingTradesWBalance$\geq$3}, \textfn{NumRevolvingTradesWBalance$\geq$5}, \textfn{NumRevolvingTradesWBalance$\geq$7}] must preserve thermometer encoding of NumRevolvingTradesWBalancegeq., which can only decrease. Actions can only turn off higher-level dummies that are on, where \textfn{NumRevolvingTradesWBalance$\geq$2} is the lowest-level dummy and \textfn{NumRevolvingTradesWBalance$\geq$7} is the highest-level-dummy.
\item ThermometerEncoding: Actions on [\textfn{NumRevolvingTrades$\geq$2}, \textfn{NumRevolvingTrades$\geq$3}, \textfn{NumRevolvingTrades$\geq$5}, \textfn{NumRevolvingTrades$\geq$7}] must preserve thermometer encoding of NumRevolvingTradesgeq., which can only decrease. Actions can only turn off higher-level dummies that are on, where \textfn{NumRevolvingTrades$\geq$2} is the lowest-level dummy and \textfn{NumRevolvingTrades$\geq$7} is the highest-level-dummy.
\item ThermometerEncoding: Actions on [\textfn{NumInstallTradesWBalance$\geq$2}, \textfn{NumInstallTradesWBalance$\geq$3}, \textfn{NumInstallTradesWBalance$\geq$5}, \textfn{NumInstallTradesWBalance$\geq$7}] must preserve thermometer encoding of NumInstallTradesWBalancegeq., which can only decrease. Actions can only turn off higher-level dummies that are on, where \textfn{NumInstallTradesWBalance$\geq$2} is the lowest-level dummy and \textfn{NumInstallTradesWBalance$\geq$7} is the highest-level-dummy.
\item ThermometerEncoding: Actions on [\textfn{NumInstallTrades$\geq$2}, \textfn{NumInstallTrades$\geq$3}, \textfn{NumInstallTrades$\geq$5}, \textfn{NumInstallTrades$\geq$7}] must preserve thermometer encoding of NumInstallTradesgeq., which can only decrease. Actions can only turn off higher-level dummies that are on, where \textfn{NumInstallTrades$\geq$2} is the lowest-level dummy and \textfn{NumInstallTrades$\geq$7} is the highest-level-dummy.
\end{constraints}

\subsection{Actionability Constraints for the \textds{givemecredit} Dataset} 

We present a list of all features and their separable actionability constraints in \cref{Table::IndividualActionSetGiveMeCredit}.
\begin{table}[!h]
\centering
\fontsize{9pt}{9pt}\selectfont
\resizebox{0.75\linewidth}{!}{\begin{tabular}{llllll}
\toprule
\textheader{Name} & \textheader{Type} & \textheader{LB} & \textheader{UB} & \textheader{Actionability} & \textheader{Sign} \\
\midrule
\textfn{Age$\leq$24} & $\{0,1\}$ & 0 & 1 & No &  \\
\textfn{Age\_bt\_25\_to\_30} & $\{0,1\}$ & 0 & 1 & No &  \\
\textfn{Age\_bt\_30\_to\_59} & $\{0,1\}$ & 0 & 1 & No &  \\
\textfn{Age$\geq$60} & $\{0,1\}$ & 0 & 1 & No &  \\
\textfn{NumberOfDependents$=$0} & $\{0,1\}$ & 0 & 1 & No &  \\
\textfn{NumberOfDependents$=$1} & $\{0,1\}$ & 0 & 1 & No &  \\
\textfn{NumberOfDependents$\geq$2} & $\{0,1\}$ & 0 & 1 & No &  \\
\textfn{NumberOfDependents$\geq$5} & $\{0,1\}$ & 0 & 1 & No &  \\
\textfn{DebtRatio$\geq$1} & $\{0,1\}$ & 0 & 1 & Yes & $+$ \\
\textfn{MonthlyIncome$\geq$3K} & $\{0,1\}$ & 0 & 1 & Yes & $+$ \\
\textfn{MonthlyIncome$\geq$5K} & $\{0,1\}$ & 0 & 1 & Yes & $+$ \\
\textfn{MonthlyIncome$\geq$10K} & $\{0,1\}$ & 0 & 1 & Yes & $+$ \\
\textfn{CreditLineUtilization$\geq$10.0} & $\{0,1\}$ & 0 & 1 & Yes &  \\
\textfn{CreditLineUtilization$\geq$20.0} & $\{0,1\}$ & 0 & 1 & Yes &  \\
\textfn{CreditLineUtilization$\geq$50.0} & $\{0,1\}$ & 0 & 1 & Yes &  \\
\textfn{CreditLineUtilization$\geq$70.0} & $\{0,1\}$ & 0 & 1 & Yes &  \\
\textfn{CreditLineUtilization$\geq$100.0} & $\{0,1\}$ & 0 & 1 & Yes &  \\
\textfn{AnyRealEstateLoans} & $\{0,1\}$ & 0 & 1 & Yes & $+$ \\
\textfn{MultipleRealEstateLoans} & $\{0,1\}$ & 0 & 1 & Yes & $+$ \\
\textfn{AnyCreditLinesAndLoans} & $\{0,1\}$ & 0 & 1 & Yes & $+$ \\
\textfn{MultipleCreditLinesAndLoans} & $\{0,1\}$ & 0 & 1 & Yes & $+$ \\
\textfn{HistoryOfLatePayment} & $\{0,1\}$ & 0 & 1 & No &  \\
\textfn{HistoryOfDelinquency} & $\{0,1\}$ & 0 & 1 & No &  \\
\bottomrule
\end{tabular}

}
\caption{Separable actionability constraints for the \textds{heloc} dataset.}
\label{Table::IndividualActionSetGiveMeCredit}
\end{table}

The non-separable actionability constraints for this dataset include:
\begin{constraints}
\item ThermometerEncoding: Actions on [\textfn{MonthlyIncome$\geq$3K}, \textfn{MonthlyIncome$\geq$5K}, \textfn{MonthlyIncome$\geq$10K}] must preserve thermometer encoding of MonthlyIncomegeq., which can only increase.Actions can only turn on higher-level dummies that are off, where \textfn{MonthlyIncome$\geq$3K} is the lowest-level dummy and \textfn{MonthlyIncome$\geq$10K} is the highest-level-dummy.
\item ThermometerEncoding: Actions on [\textfn{CreditLineUtilization$\geq$10.0}, \textfn{CreditLineUtilization$\geq$20.0}, \textfn{CreditLineUtilization$\geq$50.0}, \textfn{CreditLineUtilization$\geq$70.0}, \textfn{CreditLineUtilization$\geq$100.0}] must preserve thermometer encoding of CreditLineUtilizationgeq., which can only decrease. Actions can only turn off higher-level dummies that are on, where \textfn{CreditLineUtilization$\geq$10.0} is the lowest-level dummy and \textfn{CreditLineUtilization$\geq$100.0} is the highest-level-dummy.
\item ThermometerEncoding: Actions on [\textfn{AnyRealEstateLoans}, \textfn{MultipleRealEstateLoans}] must preserve thermometer encoding of continuousattribute., which can only decrease. Actions can only turn off higher-level dummies that are on, where \textfn{AnyRealEstateLoans} is the lowest-level dummy and \textfn{MultipleRealEstateLoans} is the highest-level-dummy.
\item ThermometerEncoding: Actions on [\textfn{AnyCreditLinesAndLoans}, \textfn{MultipleCreditLinesAndLoans}] must preserve thermometer encoding of continuousattribute., which can only decrease. Actions can only turn off higher-level dummies that are on, where \textfn{AnyCreditLinesAndLoans} is the lowest-level dummy and \textfn{MultipleCreditLinesAndLoans} is the highest-level-dummy.
\end{constraints}

\subsection{Results on Classifier Performance}

In \cref{Table::ModelPerformance}, we report the performance of models on all datasets using all algorithms. We split each dataset into a training sample (80\%, used for training and  hyperparameter tuning) and a hold-out sample (20\%, used to evaluate out-of-sample performance).

\begin{table}[h!]
    \centering
    \resizebox{0.6\linewidth}{!}{
    \begin{tabular}{llrrrr}
    \multicolumn{2}{c}{ } & \multicolumn{2}{c}{AUC} & \multicolumn{2}{c}{Error} \\
    \cmidrule(l{3pt}r{3pt}){3-4} \cmidrule(l{3pt}r{3pt}){5-6}
    Dataset & Model & Train & Test & Train & Test\\
    \midrule
    \textfn{heloc} & \LR{} & 0.7723 & 0.7882 & 0.2774 & 0.2774\\
                   & \XGB{} & 0.7721 & 0.7880 & 0.2783 & 0.2783\\
                   & \RF{} & 0.8593 & 0.7853 & 0.2877 & 0.2877\\
                   \midrule
    \textfn{german} & \LR{} & 0.8193 & 0.7602 & 0.2350 & 0.2350\\
                    & \XGB{} & 0.8191 & 0.7614 & 0.2300 & 0.2300\\
                    & \RF{} & 0.9708 & 0.7937 & 0.2350 & 0.2350\\
                    \midrule
    \textfn{givemecredit} & \LR{} & 0.8411 & 0.8441 & 0.2390 & 0.2390\\
                    & \XGB{} & 0.8412 & 0.8442 & 0.2380 & 0.2380\\
                    & \RF{} & 0.8752 & 0.7928 & 0.2619 & 0.2619\\
                    \bottomrule
    \end{tabular}
    }
    \caption{Overview of model performance}
    \label{Table::ModelPerformance}
\end{table}

\subsection{Results on the Illusion of Feasibility}
\label{Appendix::IllusionFeasibility}

We present the results of an ablation study to show how recourse may appear to be feasible when we fail to consider complex actionability constraints. Here, we repeat the experiments in \cref{Sec::Experiments} for the \textds{heloc} dataset over three classes of nested actionability constraints: 
\begin{itemize}

    \item \Asimple{}, a separable action set which only includes constraints to conditions on the immutability, integrality, and soundness of features.
    
    \item \Aseparable{}, a separable action set which includes all conditions in  \Asimple{} and adds monotonicity constraints to ensure that certain features can only increase or decrease.

    \item \Anonseparable{}, a non-separable action set which includes all conditions in \Asimple{} and \Aseparable{}. Note that this corresponds to the action set that we use in our main study.
    
\end{itemize}
We present the results from our procedure for all three action sets in \cref{Table::ExperimentalResultsFull}.

\renewcommand{\textmn}[1]{{\small\textsf{#1}}}
\begin{table*}[h!]
    \centering
    \resizebox{0.9\linewidth}{!}{\begin{tabular}{HclcccHccHcc}
\multicolumn{3}{c}{ } & \multicolumn{3}{c}{Actual} & \multicolumn{3}{c}{Separable} & \multicolumn{3}{c}{Simple} \\
\cmidrule(l{3pt}r{3pt}){4-6} \cmidrule(l{3pt}r{3pt}){7-9} \cmidrule(l{3pt}r{3pt}){10-12}
Dataset & Model Type & Metrics & \reach{} & \ar{} & \dice{} & \reach{} & \ar{} & \dice{} & \reach{} & \ar{} & \dice{}\\
\midrule

\ficoinfo{} & \LR{} & \metricsguide{} & \cell{r}{22.2\%\\77.8\%\\0.0\%\\22.2\%\\0.0\%} & \cell{r}{---\\85.9\%\\41.1\%\\14.1\%\\0.0\%} & \cell{r}{---\\57.6\%\\34.4\%\\42.4\%\\21.0\%} & \cell{r}{22.2\%\\77.8\%\\0.0\%\\22.2\%\\0.0\%} & \cell{r}{---\\85.9\%\\41.1\%\\14.1\%\\0.0\%} & \cell{r}{---\\57.0\%\\34.8\%\\43.0\%\\21.7\%} & \cell{r}{22.2\%\\77.8\%\\0.0\%\\22.2\%\\0.0\%} & \cell{r}{---\\99.9\%\\95.5\%\\0.1\%\\0.0\%} & \cell{r}{---\\65.6\%\\50.3\%\\34.4\%\\14.6\%}\\
\midrule

\ficoinfo{} & \XGB{} & \metricsguide{} & \cell{r}{22.3\%\\77.7\%\\0.0\%\\22.3\%\\0.0\%} & NA & \cell{r}{---\\57.3\%\\42.1\%\\42.7\%\\21.1\%} & \cell{r}{22.3\%\\77.7\%\\0.0\%\\22.3\%\\0.0\%} & NA & \cell{r}{---\\57.5\%\\42.0\%\\42.5\%\\21.1\%} & \cell{r}{22.3\%\\77.7\%\\0.0\%\\22.3\%\\0.0\%} & NA & \cell{r}{---\\60.5\%\\46.7\%\\39.5\%\\18.2\%}\\
\midrule

\ficoinfo{} & \RF{} & \metricsguide{} & \cell{r}{31.3\%\\68.7\%\\0.0\%\\31.3\%\\0.0\%} & NA & \cell{r}{---\\49.3\%\\29.5\%\\50.7\%\\19.8\%} & \cell{r}{31.3\%\\68.7\%\\0.0\%\\31.3\%\\0.0\%} & NA & \cell{r}{---\\49.3\%\\29.5\%\\50.7\%\\19.7\%} & \cell{r}{31.3\%\\68.7\%\\0.0\%\\31.3\%\\0.0\%} & NA & \cell{r}{---\\59.2\%\\44.8\%\\40.8\%\\15.7\%}\\

\end{tabular}
}
    \caption{Feasibility of recourse across model classes, and various actionability constraints on the $\textds{heloc}$ dataset. We determine the ground-truth feasibility of recourse using reachable sets (\confine{}), and use these results to evaluate the reliability of verification with baseline methods for recourse provision (\ar{} and \dice{}). We describe the metrics in the caption of \cref{Table::ExperimentalResults}.}
    \vspace{-1em}
    \label{Table::ExperimentalResultsFull}
\end{table*}

\end{document}